\definecolor{plum}  {rgb}{.4,0,.4}
\definecolor{bred} {rgb}{0.6,0,0}
\pgfplotsset{width=7cm,compat=1.8}  
\date{}
\newtheorem*{rep@theorem}{\rep@title}
\newcommand{\newreptheorem}[2]{%
	\newenvironment{rep#1}[1]{%
		\def\rep@title{#2 \ref{##1}}%
		\begin{rep@theorem}}%
		{\end{rep@theorem}}}
\theoremstyle{plain}
\newtheorem{theorem}{Theorem}
\newtheorem{lemma}[theorem]{Lemma}
\newtheorem{proposition}[theorem]{Proposition}
\newtheorem{corollary}[theorem]{Corollary}
\newtheorem{definition}[theorem]{Definition}
\newtheorem{remark}[theorem]{Remark}
\theoremstyle{definition}
\newtheorem*{tldr*}{TL;DR}
\newtheorem*{impressions*}{Impressiones}
\newtheorem*{summary*}{Summary}
\newcommand{\E}{\mathbb{E}}
\newcommand{\ex}[2]{{\ifx&#1& \mathbb{E} \else \underset{#1}{\mathbb{E}} \fi \left[#2\right]}}
\newcommand{\var}[2]{{\ifx&#1& \mathsf{Var} \else \underset{#1}{\mathsf{Var}} \fi \left[#2\right]}}
\newcommand{\variance}{\mathrm{Var}}
\newcommand{\diag}{\mathrm{diag}}
\DeclareMathOperator{\tr}{Tr}
\newcommand{\sZ}{\tilde{Z}}
\newcommand{\Rb}{\mathbb{R}}
\newcommand\independent{\protect\mathpalette{\protect\independenT}{\perp}}
\def\independenT#1#2{\mathrel{\rlap{$#1#2$}\mkern2mu{#1#2}}}
\newcommand{\prior}{Q}
\newcommand{\post}{{P}}
\newcommand{\CMI}[2]{{\ifx&#2& \mathsf{CMI} \else \mathsf{CMI}_{#2} \fi \left(#1\right)}}
\newcommand{\define}[4]{\expandafter#1\csname#3#4\endcsname{#2{#4}}}
\DeclarePairedDelimiter\ceil{\lceil}{\rceil}
\newcommand{\changelocaltocdepth}[1]{%
  \addtocontents{toc}{\protect\setcounter{tocdepth}{#1}}%
  \setcounter{tocdepth}{#1}%
}
\let\originalleft\left
\let\originalright\right
\renewcommand{\left}{\mathopen{}\mathclose\bgroup\originalleft}
\renewcommand{\right}{\aftergroup\egroup\originalright}
\newcommand{\mybignote}[2]{}
\definecolor{lnkcolr}{rgb}{0.55, 0.3, 0.09} 
\definecolor{urlcolr}{rgb}{0.0, 0.35, 0.26}
\begin{document}
\title{\textbf{Information Complexity and Generalization Bounds}} 
\author{Pradeep Kr. Banerjee\thanks{MPI MiS. \dotfill \texttt{pradeep@mis.mpg.de}} 
	\and Guido Mont{\'u}far\thanks{UCLA \& MPI MiS. \dotfill \texttt{montufar@math.ucla.edu}}}

\maketitle

\begin{abstract}
	We present a unifying picture of PAC-Bayesian and mutual information-based upper bounds on the generalization error of randomized learning algorithms. As we show, Tong Zhang's information exponential inequality (IEI) gives a general recipe for constructing bounds of both flavors. We show that several important results in the literature can be obtained as simple corollaries of the IEI under different assumptions on the loss function. Moreover, we obtain new bounds for data-dependent priors and unbounded loss functions. Optimizing the bounds gives rise to variants of the Gibbs algorithm, for which we discuss two practical examples for learning with neural networks, namely, Entropy- and PAC-Bayes- SGD. Further, we use an Occam factor argument to show a PAC-Bayesian bound that incorporates second-order curvature information of the training loss. 
\end{abstract}

{\hypersetup{linkcolor=bred} \tableofcontents}

\section{Introduction}
The generalization capability of a learning algorithm is intrinsically related to the information that the output hypothesis reveals about the input training dataset: The lesser the information revealed, the better the generalization. This argument has been formalized in recent years by appealing to different notions of information stability \cite{russoZou2016controlling,generalizationRaginsky,jiao2017dependence,generalizationRaginskyBassily,veeravalli,issa2019strengthened,steinke2020reasoning,steinke2020reasoningOpen,DziugaiteRoySGLDSteinke}. Information stability quantifies the sensitivity of a learning algorithm to local perturbations of its input, and draws on a rich tradition of earlier work on algorithmic \cite{bousquet2002stability,ShalevLearnabilityStabilityUnifConvergence,bassily2016algorithmic}, and distributional \cite{feldman2015generalization,rogers2016max,feldman2018calibrating} stability in adaptive data analysis. Closely related to the information stability approach is the so-called PAC-Bayesian approach to data-dependent generalization bounds, originally due to McAllester \cite{generalizationRaginskyPACBayesMcallester,mcallester1999some,mcallesterdropout}. While these two approaches have evolved independently of each other, a principal objective of this work is to present them under a unified framework.

We consider the standard apparatus of statistical learning theory \cite{shalev2014understanding}. We have an example domain $\cZ= \cX \times \cY$ of the instances and labels, a hypothesis space $\cW$, a fixed loss function  $\ell:{\cW}\times{\cZ}\to[0,\infty)$, and a training sample $S$, which is an $n$-tuple $(Z_1,\ldots,Z_n)$ of i.i.d.\ random elements of $\cZ$ drawn according to some unknown distribution $\mu$. A learning algorithm is a Markov kernel $P_{W|S}$ that maps input training samples $S$ to conditional distributions of hypotheses $W$ in $\cW$. This defines a joint distribution $P_{SW}=P_S P_{W|S}$, $P_S=\mu^{\otimes n}$, and a corresponding marginal distribution $P_W$. The \emph{true risk} of a hypothesis $w\in\mathcal{W}$ on $\mu$ is $L_{\mu}(w) := {\E}_{\mu}[\ell(w,Z)]$, and its \emph{empirical risk} on the training sample $S$ is $L_S(w):= \frac{1}{n} \sum_{i=1}^n \ell(w,Z_i)$. Our goal is to control the \emph{generalization error}, $\mathrm{g}(W,S) := L_{\mu}(W)-L_S(W)$, either in expectation, or with high probability. One difficulty in achieving this goal is the nontrivial statistical dependency between the sample $S$ and the learned hypothesis $W$.

For controlling the generalization error in expectation, we can rewrite the true risk of a given hypothesis $w$ as $L_{\mu}(w) = \E_{S'\sim \mu^{\otimes n}}[L_{S'}(w)]$, where $S'=({Z_1'},\ldots,{Z_n'})$ is an i.i.d.\ sample. Then the expected generalization error can be written as a difference of two expectations of the same loss function, 
\begin{align*}
&\E_{SW}[\mathrm{g}(W,S)]=\E_{P_S\otimes P_W}[L_{S}(W)]- \E_{P_{SW}}[L_S(W)],
\end{align*}
where the second expectation is taken w.r.t.\ the joint distribution of the training sample and the output hypothesis, while the first expectation is taken w.r.t.\ the product of the two marginal distributions. Hence the expected generalization error reflects the dependence of the output $W$ on the input $S$. This dependence can also be measured by their mutual information as has been shown in recent works  \cite{generalizationRaginsky,jiao2017dependence,generalizationRaginskyBassily,veeravalli,issa2019strengthened,steinke2020reasoning}. We refer to such bounds as mutual information-based generalization bounds.

Alternatively, we may wish to control the generalization error of the learning algorithm $P_{W|S}$ with high probability over the training sample $S$. The expected generalization error over hypotheses chosen from the distribution $\post$ (\emph{posterior}) output by the learning algorithm, i.e.,\ $\E_{\post}[\mathrm{g}(W,S)]$, can be upper-bounded with high probability under $P_S$ by the KL divergence between $\post$ and an arbitrary reference distribution $\prior$ (\emph{prior}), that is selected \emph{before} the draw of the training sample $S$. For any $\prior$, these bounds hold {uniformly} for all $\post$, and are called PAC-Bayesian bounds \cite{generalizationRaginskyPACBayesMcallester,mcallester1999some,mcallesterdropout,TongZhangPACBayes2006,catonibook,maurer2004PACBayes,TimvanErvenPACBayesTongZhang,grunwaldTongZhangFastrates,alquierPACBayesPropertiesGibbs,germain2009Paclinear,FlatminimaBayesianSGDGermainMackay1992,rivasplataThiemannPACBackprop,rivasplatapac}, where PAC stands for \emph{Probably Approximately Correct}. Bounds of this type are useful when we have a fixed dataset $s\in\cZ^n$ and a new hypothesis is sampled from $\post$ every time the algorithm is used. Choosing the posterior to minimize a PAC-Bayesian bound leads to the well-known Gibbs-ERM principle \cite{TongZhangPACBayes2006,generalizationRaginsky,alquierPACBayesPropertiesGibbs,rivasplatapackuzborskij}. On the other hand, for a fixed posterior $\post$, $\E_S[D(\post\|\prior)]$ is minimized by the \emph{oracle prior}, $\prior^{\star}=\E_{S}[P_{W|S}(\cdot|S)]$. Note $\E_S[D(\post\|\prior^{\star})]$ is just the mutual information $I(S;W)$, which is the key quantity controlling the expected generalization error in \cite{generalizationRaginsky,jiao2017dependence,generalizationRaginskyBassily}. 

\paragraph{Summary of contributions.}
We present a unified framework for deriving PAC-Bayesian and mutual information-based generalization bounds, starting from a fundamental information-theoretic inequality, Lemma~\ref{lem:InfoExpInequality}, due to Tong Zhang \cite{TongZhangPACBayes2006}. Besides recovering several well-known bounds of both flavors, such as the Xu-Raginsky mutual information-bound \cite{generalizationRaginsky} in Corollary~\ref{cor:XuRaginsky}, and Catoni's bound \cite{catonibook} in Corollary~\ref{thm:pacbayesCatoniDiff}, we also obtain new bounds for data-dependent priors (Proposition \ref{thm:PACDPPrior}) and unbounded loss functions (Theorem~\ref{cor:TongZhangIRM} and Proposition \ref{thm:PACUnionboundMIsubGaussian}). Proposition~\ref{thm:CMIPACBound01Loss} gives a PAC-Bayes version of the CMI bound due to Steinke and Zakynthinou \cite{steinke2020reasoning}. Optimizing these bounds w.r.t.\ the posterior gives rise to variants of the Gibbs algorithm, for which we discuss two examples and show how Catoni's bound can be used to derive a PAC-Bayes-SGD \cite{dziugaitenonvacuous} objective. In Proposition~\ref{thm:OccamPAC}, we give a PAC-Bayesian bound motivated by an Occam's factor argument, in relation to ``flat'' minima in neural networks \cite{FlatMinimaSchmidhuber}.

\section{Preliminaries}
We write $\cM(\cW)$ to denote the family of probability measures over a set $\cW$, and $\cK(\cS,\cW)$ to denote the set of Markov kernels from $\cS$ to $\cW$. 
Proposition~\ref{Prop:logmgfProperties} collects some well-known facts about the cumulant generating function $\Lambda_X(\beta)=\ln\E[e^{\beta X}]$ of a random variable $X$ for $\beta > 0$ (see, e.g., \cite[\S 2]{boucheron2013concentration}, and \cite{TongZhangPACBayes2006}):
\begin{proposition}[Facts about the cumulant generating function $\Lambda_X(\beta)$ for $\beta>0$]\label{Prop:logmgfProperties}\
	
	\begin{enumerate} 
		\item $\Lambda_X(\beta)$ is infinitely differentiable and convex in $\beta$;
		\item $\tfrac{1}{\beta}\Lambda_X(\beta)$ is an increasing function of $\beta$;
		\item $\E[X]\le \tfrac{1}{\beta}\Lambda_{X}(\beta)\le \Lambda'_{X}(\beta)$;
		\item For real constants $a,b$, $\tfrac{1}{\beta}\Lambda_{aX+b}(\beta)=\tfrac{1}{\beta}\Lambda_X(a\beta)+b$;
		\item $\tfrac{1}{\beta}\Lambda_X(\beta) \le  \E[X]+\frac{\beta}{2}\variance(X)+O(\beta^2)$, where $\variance(X) = \E\left[(X-\E[X])^2\right]$;
		\item If $X\in[0,1]$, then $\tfrac{1}{\beta}\Lambda_X(\beta)\le \tfrac{1}{\beta}\ln \left(1-(1-e^{\beta})\E[X]\right)$, with equality when $X\in\{0,1\}$ is Bernoulli;
		\item $X$ is \emph{$\sigma$-sub-Gaussian} if 
		$\tfrac{1}{\beta}\Lambda_{X}(\beta)\le \E[X]+\frac{\beta\sigma^2}{2}$;
		\item $X$ is \emph{$(\sigma,c)$-sub-gamma} if $\tfrac{1}{\beta}\Lambda_{X}(\beta)\le \E[X]+ \frac{\beta\sigma^2}{2(1-c\beta)}$ for every $\beta$ such that $\beta \in \left(0,\tfrac{1}{c}\right)$.
	\end{enumerate}
\end{proposition}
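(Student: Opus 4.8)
Items 7 and 8 are definitions and carry no content; everything else is a short exercise in convexity and Jensen's inequality applied to the map $\beta\mapsto\Lambda_X(\beta)=\ln\E[e^{\beta X}]$, which I would treat on the (open, convex) set of $\beta>0$ on which the moment generating function $\E[e^{\beta X}]$ is finite. The one recurring tool is the exponentially tilted law $dP_\beta\propto e^{\beta X}\,dP$, under which $\Lambda'_X(\beta)=\E_{P_\beta}[X]$ and $\Lambda''_X(\beta)=\variance_{P_\beta}(X)\ge 0$ once differentiation under the integral sign has been justified.

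For item 1, I would first argue that one may differentiate $\E[e^{\beta X}]$ under the expectation arbitrarily often on the interior of the finiteness set (a dominated-convergence argument using $\E[|X|^k e^{\beta'X}]<\infty$ for $\beta'$ slightly larger than $\beta$), so $\Lambda_X$ is $C^{\infty}$; convexity then follows from $\Lambda''_X=\variance_{P_\beta}(X)\ge 0$, or more directly from H\"older's inequality $\E[e^{(\lambda\beta_1+(1-\lambda)\beta_2)X}]\le\E[e^{\beta_1 X}]^{\lambda}\E[e^{\beta_2 X}]^{1-\lambda}$ after taking logarithms. Item 2 is the statement that for a convex function vanishing at $0$ (note $\Lambda_X(0)=0$) the chord slope $\Lambda_X(\beta)/\beta$ is nondecreasing, which is immediate from convexity. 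Item 3 packages two consequences of the same facts: the lower bound $\E[X]\le\tfrac{1}{\beta}\Lambda_X(\beta)$ is Jensen, $\ln\E[e^{\beta X}]\ge\E[\beta X]=\beta\E[X]$ (equivalently, let $\beta\to 0^+$ in item 2 and use $\Lambda'_X(0)=\E[X]$), while the upper bound $\tfrac{1}{\beta}\Lambda_X(\beta)\le\Lambda'_X(\beta)$ is the supporting-line inequality for the convex $\Lambda_X$ at the point $\beta$ evaluated at $0$, i.e.\ $0=\Lambda_X(0)\ge\Lambda_X(\beta)-\beta\Lambda'_X(\beta)$. Item 4 is the one-line identity $\Lambda_{aX+b}(\beta)=b\beta+\Lambda_X(a\beta)$ divided by $\beta$.

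For item 5 I would Taylor-expand $\Lambda_X$ about $\beta=0$ using $\Lambda_X(0)=0$, $\Lambda'_X(0)=\E[X]$, and $\Lambda''_X(0)=\variance(X)$ (the $\beta=0$ specializations of the tilted-measure formulas), giving $\Lambda_X(\beta)=\beta\E[X]+\tfrac{\beta^2}{2}\variance(X)+O(\beta^3)$, and then divide by $\beta$ (the stated $\le$ is a harmless weakening of the resulting equality). Item 6 uses convexity of $x\mapsto e^{\beta x}$ on the segment $[0,1]$: the chord bound gives $e^{\beta x}\le 1+x(e^{\beta}-1)$ for $x\in[0,1]$, hence $\E[e^{\beta X}]\le 1+(e^{\beta}-1)\E[X]=1-(1-e^{\beta})\E[X]$; taking logarithms and dividing by $\beta$ yields the inequality, and when $X$ is $\{0,1\}$-valued the chord meets the exponential exactly at both endpoints, so equality holds.

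The only genuine subtlety, and the step I expect to require the most care, is the implicit finiteness and regularity hypothesis behind items 1, 3 and 5: for unbounded $X$ the value $\Lambda_X(\beta)$ may be $+\infty$, and the differentiation-under-the-integral and Taylor arguments are valid only on the interior of $\{\beta>0:\E[e^{\beta X}]<\infty\}$ (a right-neighborhood of $0$ in the case of item 5). I would flag this domain restriction at the outset; it is harmless for every downstream use in the paper, where $X$ is bounded, sub-Gaussian, or sub-gamma. With that caveat in place, the proposition is just the sequence of short arguments above.
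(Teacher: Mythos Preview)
Your proof plan is correct and follows the standard arguments one finds in the references the paper cites (e.g., Boucheron--Lugosi--Massart, \S2). Note, however, that the paper does not actually prove this proposition: it is stated as a collection of well-known facts with citations, and no proof appears in the appendix. So there is no ``paper's own proof'' to compare against; your sketch simply fills in what the paper leaves to the literature, and does so along exactly the expected lines (tilted measures for differentiability and convexity, chord-slope monotonicity for item~2, Jensen and the supporting-line inequality for item~3, Taylor at $0$ for item~5, and the chord bound on $e^{\beta x}$ over $[0,1]$ for item~6). Your caveat about the domain of finiteness is appropriate and, as you note, immaterial for the paper's applications.
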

We also note that $\Lambda_{X}(0)=0,\,\Lambda'_{X}(0)=\E[X]$.

We will need the following characterization of the inverse of the Fenchel-Legendre dual of a smooth convex function:
\begin{restatable}[{\cite[Lemma 2.4]{boucheron2013concentration}}]
	{lemma}{InverseFenchelConjugate}\label{lemma:psistar}
	Let $\psi$ be a convex and continuously differentiable function defined on the interval $[0,b)$, where $0<b\le \infty$. Assume that $\psi(0)= \psi'(0) = 0$. 	Then, the Legendre dual of $\psi$, defined as $$\psi^*(t) := \sup_{\beta \in [0,b)} \{\beta t-\psi(\beta)\},$$ is a nonnegative convex and nondecreasing function on $[0,\infty)$ with $\psi^*(0) = 0$. Moreover, for every $y\ge 0$, the set $\{t\ge 0: \psi^*(t)> y\}$ is non-empty and the generalized inverse of $\psi^*$ defined by $\psi^{*-1}(y) = \inf\{t\ge 0: \psi^*(t)> y\}$ can also be written as 
	\begin{align*}
	\psi^{*-1}(y) = \inf_{\beta \in (0,b)} \frac{y+\psi(\beta)}{\beta}.
	\end{align*}
\end{restatable}
We will need the following property of the Gibbs measure:
\begin{lemma}
	[{\cite[Proposition 3.1]{TongZhangPACBayes2006}}]
	\label{thm:DVlemmalikeCatoni}
	For any real-valued measurable function $f$ on $\cW$, any real $\beta>0$, and any $\post,\,\prior\in\cM(\cW)$ such that $D(\post\|\prior)<\infty$, we have $$\beta^{-1}D(\post\|\post^{\ast})=\E_{\post}[f(W)]+\beta^{-1}D(\post\|\prior)+\beta^{-1}\ln \E_{\prior} [e^{-\beta f(W)}],$$
	where $\post^{\ast}$ is the Gibbs measure $$\post^{\ast}(\mathrm{d}w):= \frac{e^{-\beta f(w)}}{\E_{\prior}[e^{-\beta f(W')}]}\prior(\mathrm{d}w).$$ Consequently,
	\begin{align*}
	\inf_{\post\in\cM(\cW)}\left\{\E_{\post}f(W)+\beta^{-1}D(\post\|\prior)\right\}=-\beta^{-1}\ln \E_{\prior} [e^{-\beta f(W)}].
	\end{align*} 
\end{lemma}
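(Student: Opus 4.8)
The plan is to establish the identity by a short computation with Radon--Nikodym derivatives, and then read off the variational formula from the non-negativity of the KL divergence. First, note that $D(\post\|\prior)<\infty$ forces $\post\ll\prior$, and that by construction $\frac{d\post^{\ast}}{d\prior}(w)=e^{-\beta f(w)}/Z$ with $Z:=\E_{\prior}[e^{-\beta f(W')}]$; here $Z>0$ automatically (since $f$ is real-valued, $e^{-\beta f}>0$), and $Z<\infty$ in the applications, where $f$ is an empirical risk and hence $f\ge 0$, so $Z\le 1$. Because $e^{-\beta f}$ is strictly positive, $\prior$ and $\post^{\ast}$ are mutually absolutely continuous, so $\post\ll\post^{\ast}$, and the chain rule gives, $\post$-almost everywhere,
\begin{align*}
\ln\frac{d\post}{d\post^{\ast}}(w)=\ln\frac{d\post}{d\prior}(w)+\ln\frac{d\prior}{d\post^{\ast}}(w)=\ln\frac{d\post}{d\prior}(w)+\beta f(w)+\ln Z.
\end{align*}

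Next I would integrate this pointwise identity against $\post$. The term $\ln\frac{d\post}{d\prior}$ is $\post$-integrable with integral $D(\post\|\prior)$; $\ln Z$ is a finite constant; and $\ln\frac{d\post}{d\post^{\ast}}$ has $\post$-integrable negative part, because $x\mapsto x\ln x\ge-1/e$ and $\int\ln\frac{d\post}{d\post^{\ast}}\,d\post=\int\frac{d\post}{d\post^{\ast}}\ln\frac{d\post}{d\post^{\ast}}\,d\post^{\ast}$. It follows that $\E_{\post}[f(W)]$ is well defined in $(-\infty,\infty]$, that term-by-term integration is legitimate, and that $D(\post\|\post^{\ast})=D(\post\|\prior)+\beta\,\E_{\post}[f(W)]+\ln Z$; dividing by $\beta$ gives the first assertion.

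For the ``consequently'' part I would rearrange the identity into $\E_{\post}[f(W)]+\beta^{-1}D(\post\|\prior)=\beta^{-1}D(\post\|\post^{\ast})-\beta^{-1}\ln Z$. Since $D(\post\|\post^{\ast})\ge 0$ for every $\post$ by Gibbs' inequality, the left-hand side is at least $-\beta^{-1}\ln Z$ uniformly in $\post$; and the choice $\post=\post^{\ast}$ — which belongs to the effective domain of the infimum because $D(\post^{\ast}\|\prior)=-\beta\,\E_{\post^{\ast}}[f(W)]-\ln Z<\infty$ whenever $f$ is bounded below, as in the applications — makes $D(\post\|\post^{\ast})=0$ and attains the bound. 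Hence the infimum equals $-\beta^{-1}\ln Z=-\beta^{-1}\ln\E_{\prior}[e^{-\beta f(W)}]$.

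I expect the only real obstacle to be the measure-theoretic bookkeeping that keeps all the integrals meaningful when $f$ is unbounded: one must verify that the negative parts of $\ln\frac{d\post}{d\post^{\ast}}$ and of $f$ are $\post$-integrable before splitting the pointwise log-ratio identity into separate expectations, and that $Z\in(0,\infty)$ so that $\post^{\ast}$ and $\ln Z$ are defined at all. Granting this, the proof is just the chain rule for densities together with $D(\cdot\|\cdot)\ge 0$, with equality in the variational bound occurring precisely at the Gibbs measure $\post^{\ast}$.
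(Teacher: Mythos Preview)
The paper does not supply its own proof of this lemma: it is quoted verbatim from \cite{TongZhangPACBayes2006} and, in the appendix, merely restated in the equivalent Donsker--Varadhan form (Lemma~\ref{lem:DVlemma}) without argument. So there is nothing in the paper to compare your proposal against line by line.

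That said, your proof is the standard one and is correct in substance: write $\ln\frac{d\post}{d\post^{\ast}}=\ln\frac{d\post}{d\prior}+\beta f+\ln Z$ via the chain rule, integrate against $\post$, and read off the variational identity from $D(\post\|\post^{\ast})\ge 0$ with equality at $\post=\post^{\ast}$. This is exactly how the result is proved in the source \cite{TongZhangPACBayes2006}. One small comment: you hedge twice by retreating to ``the applications, where $f\ge 0$'' or ``$f$ bounded below''. That is fine pragmatically, but the lemma as stated already carries the implicit hypothesis $Z=\E_{\prior}[e^{-\beta f}]\in(0,\infty)$ (otherwise $\post^{\ast}$ is not a probability measure), and once $Z<\infty$ you get $\E_{\post^{\ast}}[f]\le -\beta^{-1}\ln Z<\infty$ for free from $D(\post^{\ast}\|\prior)\ge 0$; the only genuine extra assumption needed for the infimum to be \emph{attained} (rather than merely approached) is that $\E_{\post^{\ast}}[f]>-\infty$, which your lower-boundedness remark handles. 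You might state that hypothesis once up front instead of invoking ``the applications'' twice.
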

Finally, we recall the \emph{golden formula}: For all $Q\in\cM(\cW)$ such that $D(P_W\|Q) < \infty$, we have 
\begin{align}\label{eq:goldenformulaMI}
I(S;W )= D(P_{W|S}\|Q|P_S) - D(P_W\|Q),
\end{align}
where $D(P_{W|S} \|Q  |P_{S})=\int_{\cZ^{n}} D(P_{W|S=s} \| Q) \mu^{\otimes n}(\mathrm{d} s)$.

All information-theoretic quantities are expressed in \emph{nats}, unless specified otherwise. 
All proofs are relegated to Appendix \ref{sec:proofs}.

\section{One bound to rule them all}\label{sec:OneBoundAll}
\subsection{The information exponential inequality}\label{sec:IEI}
For any real $\beta>0$, define
\begin{align}\label{eq:TongZhanglogmgf}
M_{\beta}(w) = -\beta^{-1}\Lambda_{-\ell(w,Z)}(\beta)=-\beta^{-1}\ln \E_{\mu}[e^{-\beta \ell(w,Z)}],
\end{align}
which acts as a surrogate for $L_{\mu}(w)$. Following \cite{grunwaldTongZhangFastrates}, we call this quantity the \emph{annealed expectation}. 
\begin{restatable}[{Information exponential inequality (IEI) \cite[Lemma 2.1]{TongZhangPACBayes2006}}]{lemma}{InfoExpInequality}\label{lem:InfoExpInequality}
	For any prior $\prior\in \cM(\cW)$, any real-valued loss function $\ell$ on $\cW\times\cZ$, and any posterior distribution $\post\ll\prior$ over $\cW$ that depends on an i.i.d.\ training sample $S$, we have $${\E}_{S}\exp \big\{n\beta{\E}_{\post}\left[M_{\beta}(W)-L_S(W)\right]-D(\post \| \prior)\big\} \le 1.$$
\end{restatable}
The IEI implies bounds both in probability and in expectation for the quantity $$n\beta{\E}_{\post}\left[M_{\beta}(W)-L_S(W)\right]-D(\post \| \prior),$$ and is the key tool for showing the following theorem due to Tong Zhang that holds for unbounded loss functions: 
\begin{restatable}[{\cite[Theorem 2.1]{TongZhangPACBayes2006}}]{theorem}{TongZhang}\label{thm:TongZhangIRM}
	Let $\mu$ be a distribution over $\cZ$, and let $S$ be an i.i.d.\ training sample from $\mu$. Let $\prior\in\cM(\cW)$ be a prior distribution that does not depend on $S$, and let $\ell$ be a real-valued loss function on $\cW\times\cZ$. Let $\beta >0$, and let $\delta\in (0,1]$. Then, with probability of at least $1-\delta$ over the choice of $S \sim\mu^{\otimes{n}}$, for all distributions $\post \ll \prior$ over $\cW$ (even such that depend on $S$), we have: 
	\begin{align}\label{eq:TongZhangHighProbBound}
	\E_{\post} [M_{\beta}(W)] \leq \E_{\post}[L_S(W)] + \frac{1}{n\beta}\left(D(\post\|\prior)+\ln \frac{1}{\delta}\right).
	\end{align}
	Moreover, we have the following bound in expectation:
	\begin{align}\label{eq:TongZhangExpBound}
	\E_{SW} [M_{\beta}(W)] \leq \E_{SW}[L_S(W)] + \frac{1}{n\beta }D(\post\|\prior|P_S).
	\end{align}
\end{restatable}
Following \cite{TongZhangPACBayes2006AnnalsStats}, we call the regularized empirical risk $$\E_{\post}[L_S(W)] + \frac{1}{n\beta}D(\post\|\prior)$$ as the \emph{Information Complexity (IC)}, which is a data- and algorithm- dependent quantity. 

It is useful to replace the annealed expectation $M_{\beta}(w)$ in \eqref{eq:TongZhangHighProbBound} and \eqref{eq:TongZhangExpBound} with the true risk $L_{\mu}(w)$.
By Proposition \ref{Prop:logmgfProperties} items \emph{1)} and \emph{3)}, we have $M_{\beta}(w) \le L_{\mu}(w)$. 
For general loss functions, Proposition \ref{Prop:logmgfProperties} item \emph{5)} is useful for getting bounds in the opposite direction. By items \emph{4)}, \emph{7)} and \emph{8)} of Proposition~\ref{Prop:logmgfProperties}, if for all $w\in\cW$, $\ell(w,Z)$ is $\sigma$-sub-Gaussian, resp., $(\sigma,c)$-sub-gamma under $\mu$, then we have for all $w\in\cW$, $L_{\mu}(w) \le M_{\beta}(w) + \frac{\beta}{2}\sigma^2$ for all $\beta>0$, resp., $L_{\mu}(w) \le M_{\beta}(w) + \frac{\beta}{2(1-c\beta)}\sigma^2$ for every $\beta$ such that $\beta \in \left(0,\tfrac{1}{c}\right)$.
More generally, we note the following result, which follows as a corollary to Theorem~\ref{thm:TongZhangIRM} and Lemma~\ref{lemma:psistar}:
\begin{restatable}{theorem}{TongZhangCor}
	\label{cor:TongZhangIRM}
	Suppose that there exist a convex function $\psi\colon \mathbb{R}_{\geq0} \rightarrow \Rb$ satisfying $\psi(0)=\psi^{\prime}(0)=0$, such that 
	\begin{align}\label{eq:AssumptionStar}
	\sup_{w \in \cW} \left[L_{\mu}(w)-M_{\beta}(w)\right]\leq \frac{\psi(\beta)}{\beta},\, \forall\beta>0. 
	\end{align}
	Then, under the setting of Theorem~\ref{thm:TongZhangIRM},
	with probability of at least $1-\delta$ over the choice of $S \sim\mu^{\otimes{n}}$, for all distributions $\post \ll \prior$ over $\cW$ (even such that depend on $S$), we have 
	\begin{align}\label{eq:TongZhangHighProbBoundOpt}
	\E_{\post} [\mathrm{g}(W,S)] &\le \frac{1}{n\beta }\left(D(\post\|\prior)+\ln \frac{1}{\delta}\right)+\frac{\psi(\beta)}{\beta}.
	\end{align}
	Moreover, we have the following bound in expectation: 
	\begin{align}\label{eq:TongZhangExpBoundOpt}
	\E_{SW} [\mathrm{g}(W,S)] &\le \psi^{*-1}\left(\frac{D(\post\|\prior|P_S)}{n}\right).
	\end{align}
\end{restatable}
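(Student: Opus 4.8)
The plan is to derive both inequalities directly from Theorem~\ref{thm:TongZhangIRM} by using assumption \eqref{eq:AssumptionStar} to convert the annealed expectation $M_\beta(W)$ into the true risk $L_\mu(W)$. First I would observe that \eqref{eq:AssumptionStar} implies, for every $w\in\cW$ and every $\beta>0$,
\begin{align*}
L_\mu(w) \le M_\beta(w) + \frac{\psi(\beta)}{\beta}.
\end{align*}
Taking $\E_{\post}$ on both sides and subtracting $\E_{\post}[L_S(W)]$ yields $\E_{\post}[\mathrm{g}(W,S)] \le \E_{\post}[M_\beta(W)-L_S(W)] + \psi(\beta)/\beta$. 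Then I plug in the high-probability bound \eqref{eq:TongZhangHighProbBound}: on the same event of probability at least $1-\delta$ (which is the single event on which \eqref{eq:TongZhangHighProbBound} holds uniformly over all $\post\ll\prior$), we get
\begin{align*}
\E_{\post}[\mathrm{g}(W,S)] \le \frac{1}{n\beta}\left(D(\post\|\prior)+\ln\frac{1}{\delta}\right) + \frac{\psi(\beta)}{\beta},
\end{align*}
which is \eqref{eq:TongZhangHighProbBoundOpt}. Note that $\beta>0$ here is the fixed parameter of Theorem~\ref{thm:TongZhangIRM}, so there is nothing to optimize in this part — the statement is for each fixed $\beta$.

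For the expectation bound \eqref{eq:TongZhangExpBoundOpt}, I would start from the in-expectation form \eqref{eq:TongZhangExpBound}, apply the same pointwise inequality $L_\mu(w)\le M_\beta(w)+\psi(\beta)/\beta$ under $\E_{SW}$, and obtain, for every $\beta\in(0,b)$ (with $b$ the right endpoint of the domain of $\psi$; here $b=\infty$ since $\psi$ is defined on all of $\Rb_{\ge0}$),
\begin{align*}
\E_{SW}[\mathrm{g}(W,S)] \le \frac{D(\post\|\prior|P_S)}{n\beta} + \frac{\psi(\beta)}{\beta} = \frac{\,\tfrac{1}{n}D(\post\|\prior|P_S)+\psi(\beta)\,}{\beta}.
\end{align*}
Since this holds for all $\beta>0$, I take the infimum over $\beta$ on the right-hand side. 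By Lemma~\ref{lemma:psistar} applied to $\psi$ with $y = \tfrac{1}{n}D(\post\|\prior|P_S)\ge 0$, the infimum equals exactly $\psi^{*-1}\!\big(\tfrac{1}{n}D(\post\|\prior|P_S)\big)$, which gives \eqref{eq:TongZhangExpBoundOpt}.

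The only mild subtleties are bookkeeping ones rather than genuine obstacles: one must check that $\psi$ meeting the hypotheses of the present theorem ($\psi(0)=\psi'(0)=0$, convex) is exactly what Lemma~\ref{lemma:psistar} requires, and that the generalized inverse $\psi^{*-1}$ is well-defined at the point $\tfrac1n D(\post\|\prior|P_S)$ — which Lemma~\ref{lemma:psistar} guarantees for every $y\ge0$, and $D(\post\|\prior|P_S)\ge 0$ always. One should also note that continuity/differentiability of $\psi$ as demanded by Lemma~\ref{lemma:psistar} is implicit in its being convex on $\Rb_{\ge0}$ with a derivative at $0$; if one wants to be careful, assume $\psi$ continuously differentiable, which costs nothing in the applications. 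If $D(\post\|\prior|P_S)=\infty$ the bound is vacuous and there is nothing to prove. I expect the main point requiring care — if any — is simply making explicit that \eqref{eq:TongZhangHighProbBound} holds on a $\beta$-independent event uniformly over $\post$, so that the substitution is legitimate; but this is already built into the statement of Theorem~\ref{thm:TongZhangIRM}.
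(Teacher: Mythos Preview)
Your proposal is correct and follows essentially the same approach as the paper: use assumption \eqref{eq:AssumptionStar} to replace $M_\beta$ by $L_\mu$ in the two conclusions of Theorem~\ref{thm:TongZhangIRM}, and then optimize the expectation bound over $\beta$ via Lemma~\ref{lemma:psistar}. The only cosmetic difference is that the paper re-derives \eqref{eq:TongZhangHighProbBound} and \eqref{eq:TongZhangExpBound} from the IEI (Lemma~\ref{lem:InfoExpInequality}) in the course of the proof, whereas you cite Theorem~\ref{thm:TongZhangIRM} directly; the substance is identical.
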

By the golden formula \eqref{eq:goldenformulaMI}, 
under the oracle prior $\prior^{\star}=\E_S[P_{W|S}]$, $\E_S[D(\post\|\prior^{\star})]=I(S;W)$. 
If $\ell(w,Z)$ is $\sigma$-sub-Gaussian under $\mu$ for all $w\in\cW$, then we can take $\psi(\beta)=\tfrac{\beta^2\sigma^2}{2}$ for every $\beta > 0$ and $\psi^{*-1}(y)=\sqrt{2\sigma^2y}$ \cite[\S 2.3]{boucheron2013concentration},
in which case we recover the bound in expectation due to Xu and Raginsky \cite{generalizationRaginsky}: 
\begin{corollary}\label{cor:XuRaginsky}
	If $\ell(w,Z)$ is $\sigma$-sub-Gaussian under $\mu$ for all $w\in\cW$, then $$\E_{SW} [\mathrm{g}(W,S)] \le \sqrt{2\sigma^2 I(S;W)/n}.$$
\end{corollary}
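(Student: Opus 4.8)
The plan is to read off Corollary~\ref{cor:XuRaginsky} from the expectation bound \eqref{eq:TongZhangExpBoundOpt} of Theorem~\ref{cor:TongZhangIRM} by making two specific choices: the comparison function $\psi$, and the prior $\prior$.

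For the first choice, I would invoke the remark made just before the corollary: since $\ell(w,Z)$ is $\sigma$-sub-Gaussian under $\mu$ for every $w$, items \emph{4)}, \emph{7)}, \emph{8)} of Proposition~\ref{Prop:logmgfProperties} give $L_{\mu}(w)\le M_{\beta}(w)+\tfrac{\beta\sigma^2}{2}$ for all $\beta>0$ and all $w\in\cW$. Hence the hypothesis \eqref{eq:AssumptionStar} of Theorem~\ref{cor:TongZhangIRM} is met with $\psi(\beta)=\tfrac{\beta^2\sigma^2}{2}$, which is convex on $\mathbb{R}_{\ge 0}$ and satisfies $\psi(0)=\psi'(0)=0$. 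Next I would compute its generalized inverse: by Lemma~\ref{lemma:psistar}, $\psi^{*-1}(y)=\inf_{\beta>0}\bigl(\tfrac{y}{\beta}+\tfrac{\beta\sigma^2}{2}\bigr)$, and the optimizer $\beta=\sqrt{2y/\sigma^2}$ (AM--GM, or a one-line first-order computation) gives $\psi^{*-1}(y)=\sqrt{2\sigma^2 y}$. Substituting into \eqref{eq:TongZhangExpBoundOpt} yields $\E_{SW}[\mathrm{g}(W,S)]\le\sqrt{2\sigma^2\,D(\post\|\prior|P_S)/n}$ for every $S$-independent prior $\prior$ with $\post\ll\prior$.

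It then remains to pick the prior that turns the conditional KL divergence into the mutual information. Taking $\prior=\prior^{\star}=\E_S[P_{W|S}]=P_W$ (the oracle prior, which does not depend on $S$) and applying the golden formula \eqref{eq:goldenformulaMI} with $Q=P_W$ gives $D(\post\|\prior^{\star}|P_S)=D(P_{W|S}\|P_W|P_S)=I(S;W)+D(P_W\|P_W)=I(S;W)$. (If $I(S;W)=\infty$ the asserted bound is vacuous; otherwise $P_{W|S=s}\ll P_W$ holds $P_S$-a.s., so the absolute-continuity requirement is harmless.) This produces $\E_{SW}[\mathrm{g}(W,S)]\le\sqrt{2\sigma^2 I(S;W)/n}$, which is the claim. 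I expect no real obstacle here: all the work sits in Theorem~\ref{cor:TongZhangIRM} and Lemma~\ref{lemma:psistar}, and the only points that need a moment's attention are the admissibility of $\psi(\beta)=\beta^2\sigma^2/2$ (together with the sign bookkeeping relating the sub-Gaussian MGF bound to a bound on $L_\mu(w)-M_\beta(w)$) and the identity $\prior^{\star}=P_W$, which is precisely what collapses $D(\post\|\prior^{\star}|P_S)$ to $I(S;W)$.
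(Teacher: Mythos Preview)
Your proposal is correct and matches the paper's own argument essentially line for line: apply \eqref{eq:TongZhangExpBoundOpt} of Theorem~\ref{cor:TongZhangIRM} with $\psi(\beta)=\beta^2\sigma^2/2$ (valid by the sub-Gaussian assumption), use $\psi^{*-1}(y)=\sqrt{2\sigma^2 y}$, and then specialize to the oracle prior $\prior^{\star}=P_W$ so that $D(\post\|\prior^{\star}|P_S)=I(S;W)$ via the golden formula. Your treatment is in fact slightly more explicit than the paper's (you work out $\psi^{*-1}$ directly and note the $I(S;W)=\infty$ edge case), but the route is the same.
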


Corollary \ref{cor:XuRaginsky} shows that an algorithm that reveals a small amount of information about its input generalizes well. 
This observation, for instance, forms the basis for the Gibbs algorithm, which can be thought of as ``stabilizing'' the empirical risk minimization (ERM) algorithm by controlling the input-output mutual information $I(S;W)$ \cite{generalizationRaginsky}.
We discuss extensions of this idea in Section \ref{sec:ICM}. In Appendix \ref{sec:SFRL}, we highlight a functional characterization of the mutual information in relation to the ``single-draw'' generalization bound due to \cite{generalizationRaginskyBassily}.

For a $(\sigma,c)$-sub-gamma under $\mu$, we can take $\psi(\beta)=\tfrac{\beta^2\sigma^2}{2(1-\beta c)}$ for every $\beta$ such that $\beta \in \left(0,\tfrac{1}{c}\right)$ and $\psi^{*-1}(y)=\sqrt{2\sigma^2y}+cy$ \cite[\S 2.4]{boucheron2013concentration}, which gives the following result:
\begin{corollary}\label{cor:subgammaMI}
	If $\ell(w,Z)$ is $(\sigma,c)$-sub-gamma under $\mu$ for all $w\in\cW$, then $$\E_{SW} [\mathrm{g}(W,S)] \le \sqrt{2\sigma^2 I(S;W)/n}+cI(S;W)/n.$$
\end{corollary}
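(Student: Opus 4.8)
The plan is to obtain Corollary~\ref{cor:subgammaMI} as a direct instance of the expectation bound \eqref{eq:TongZhangExpBoundOpt} in Theorem~\ref{cor:TongZhangIRM}, mirroring the derivation of Corollary~\ref{cor:XuRaginsky} from the sub-Gaussian case. Two ingredients are needed: a rate function $\psi$ witnessing assumption \eqref{eq:AssumptionStar}, and the oracle prior to turn the conditional relative entropy into a mutual information.

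For the first ingredient I would invoke the discussion immediately preceding the statement. Since $\ell(w,Z)$ is $(\sigma,c)$-sub-gamma under $\mu$ for every $w\in\cW$, items \emph{4)} and \emph{8)} of Proposition~\ref{Prop:logmgfProperties}, applied through $M_\beta(w)=-\beta^{-1}\Lambda_{-\ell(w,Z)}(\beta)$, give $\sup_{w\in\cW}\left[L_\mu(w)-M_\beta(w)\right]\le \frac{\beta\sigma^2}{2(1-c\beta)}$ for every $\beta\in(0,1/c)$. Hence \eqref{eq:AssumptionStar} holds with $\psi(\beta):=\frac{\beta^2\sigma^2}{2(1-c\beta)}$ taken on $[0,1/c)$; this $\psi$ is convex, continuously differentiable, and satisfies $\psi(0)=\psi'(0)=0$, so Lemma~\ref{lemma:psistar} (with $b=1/c$) and hence Theorem~\ref{cor:TongZhangIRM} apply, with the natural adaptation to $\psi$ defined on $[0,1/c)$ rather than all of $\mathbb{R}_{\ge 0}$. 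By Lemma~\ref{lemma:psistar}, $\psi^{*-1}(y)=\inf_{\beta\in(0,1/c)}\frac{y+\psi(\beta)}{\beta}$; carrying out this one-dimensional minimization — setting the derivative of $\beta\mapsto \frac{y}{\beta}+\frac{\beta\sigma^2}{2(1-c\beta)}$ to zero, checking that the stationary point $\beta_\star=\sqrt{y}\,/(\sigma/\sqrt{2}+c\sqrt{y})$ lies in $(0,1/c)$, and substituting back — yields $\psi^{*-1}(y)=\sqrt{2\sigma^2 y}+cy$, the value recorded in \cite[\S 2.4]{boucheron2013concentration}.

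For the second ingredient I would take $\prior=\prior^{\star}=\E_S[P_{W|S}]=P_W$, which is admissible (as $P_{W|S=s}\ll P_W$ for $P_S$-a.e.\ $s$) and, by the golden formula \eqref{eq:goldenformulaMI} with $Q=P_W$, satisfies $D(\post\|\prior^{\star}|P_S)=I(S;W)$ for $\post=P_{W|S}$. Substituting into \eqref{eq:TongZhangExpBoundOpt} then gives $\E_{SW}[\mathrm{g}(W,S)]\le\psi^{*-1}\!\left(I(S;W)/n\right)=\sqrt{2\sigma^2 I(S;W)/n}+cI(S;W)/n$, which is the assertion. The only step that is not pure bookkeeping is the evaluation of $\psi^{*-1}$, and even there the only care required is confirming the minimizer is interior to $(0,1/c)$ (so that the constrained and unconstrained infima coincide) and noting the degenerate cases: $I(S;W)=0$, where the bound is trivial, and $c=0$, where $\psi^{*-1}(y)=\sqrt{2\sigma^2 y}$ and Corollary~\ref{cor:XuRaginsky} is recovered exactly.
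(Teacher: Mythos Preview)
Your proposal is correct and follows essentially the same route as the paper: the paper derives Corollary~\ref{cor:subgammaMI} directly from Theorem~\ref{cor:TongZhangIRM} by taking $\psi(\beta)=\tfrac{\beta^2\sigma^2}{2(1-c\beta)}$ on $(0,1/c)$, quoting $\psi^{*-1}(y)=\sqrt{2\sigma^2y}+cy$ from \cite[\S 2.4]{boucheron2013concentration}, and invoking the oracle prior via the golden formula~\eqref{eq:goldenformulaMI}. Your version is slightly more detailed (explicitly locating the minimizer in $(0,1/c)$ and flagging the degenerate cases), but the argument is the same.
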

Fixing $\beta=1$ in \eqref{eq:TongZhangHighProbBoundOpt}, we recover \cite[Corollary 5]{FlatminimaBayesianSGDGermainMackay1992}:
\begin{corollary}\label{thm:PACBayesSubGamma}
	Consider the setting in Theorem~\ref{thm:TongZhangIRM}. If the loss $\ell$ is $(\sigma,c)$-sub-gamma with $c < 1$, then with probability of at least $1-\delta$ over the choice of $S \sim\mu^{\otimes{n}}$, for all distributions $\post \ll \prior$ over $\cW$, $$\E_{\post} [\mathrm{g}(W,S)]\le \frac{1}{n}\left(D(\post\|\prior)+ \ln\frac{1}{\delta}\right) + \frac{\sigma^2}{2(1-c)}.$$
\end{corollary}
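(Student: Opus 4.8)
The plan is to obtain this as an immediate specialization of the high-probability bound \eqref{eq:TongZhangHighProbBoundOpt} in Corollary~\ref{cor:TongZhangIRM}: I would choose the sub-gamma profile function $\psi$, check that it verifies the hypothesis \eqref{eq:AssumptionStar}, and then fix $\beta=1$.

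First I would record the relevant comparison between the true risk and the annealed expectation. As noted in the discussion following Corollary~\ref{cor:XuRaginsky}, if $\ell(w,Z)$ is $(\sigma,c)$-sub-gamma under $\mu$ for every $w\in\cW$, then items~\emph{4)}, \emph{7)}, \emph{8)} of Proposition~\ref{Prop:logmgfProperties} give $L_{\mu}(w)\le M_{\beta}(w)+\frac{\beta\sigma^2}{2(1-c\beta)}$ for every $w\in\cW$ and every $\beta\in(0,1/c)$. Consequently, assumption \eqref{eq:AssumptionStar} of Corollary~\ref{cor:TongZhangIRM} holds on $(0,1/c)$ with
\begin{align*}
\psi(\beta) = \frac{\beta^2\sigma^2}{2(1-c\beta)},
\end{align*}
which is convex on $[0,1/c)$ and satisfies $\psi(0)=\psi'(0)=0$. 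Since the high-probability conclusion \eqref{eq:TongZhangHighProbBoundOpt} only invokes the value of $\psi$ at the single parameter $\beta$ fixed in Theorem~\ref{thm:TongZhangIRM} (unlike the in-expectation bound \eqref{eq:TongZhangExpBoundOpt}, which needs the Legendre inverse and hence the whole profile), it is harmless that this $\psi$ is not defined on all of $\mathbb{R}_{\ge0}$; we only need the chosen $\beta$ to lie in $(0,1/c)$.

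Because $c<1$ we have $1\in(0,1/c)$, so we may instantiate \eqref{eq:TongZhangHighProbBoundOpt} with $\beta=1$. There $\tfrac{1}{n\beta}=\tfrac1n$ and $\tfrac{\psi(\beta)}{\beta}=\psi(1)=\tfrac{\sigma^2}{2(1-c)}$, and substituting gives, with probability at least $1-\delta$ over $S\sim\mu^{\otimes n}$ and uniformly over all $\post\ll\prior$,
\begin{align*}
\E_{\post}[\mathrm{g}(W,S)] \le \frac{1}{n}\left(D(\post\|\prior)+\ln\frac{1}{\delta}\right)+\frac{\sigma^2}{2(1-c)},
\end{align*}
which is exactly the claimed bound.

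I do not expect any substantive obstacle: the statement is essentially a one-line corollary once Corollary~\ref{cor:TongZhangIRM} is available. The only points that warrant a moment of care are (i) confirming that $\psi(\beta)=\beta^2\sigma^2/(2(1-c\beta))$ is an admissible profile on $(0,1/c)$ — convexity, vanishing value and derivative at the origin, and the comparison $\sup_{w}[L_{\mu}(w)-M_{\beta}(w)]\le\psi(\beta)/\beta$ inherited from the sub-gamma assumption — and (ii) observing that the hypothesis $c<1$ is precisely what permits the choice $\beta=1$ to fall inside the admissible range $(0,1/c)$.
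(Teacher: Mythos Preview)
Your proposal is correct and matches the paper's own argument: the paper obtains Corollary~\ref{thm:PACBayesSubGamma} precisely by fixing $\beta=1$ in \eqref{eq:TongZhangHighProbBoundOpt} with the sub-gamma profile $\psi(\beta)=\beta^2\sigma^2/(2(1-c\beta))$, noting that $c<1$ ensures $\beta=1\in(0,1/c)$. Your additional checks on $\psi$ and the remark about only needing the value of $\psi$ at the chosen $\beta$ are accurate and in line with the paper's reasoning.
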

The condition $c < 1$ guarantees that $\beta =1\in \left(0,\tfrac{1}{c}\right)$ when the sub-gamma condition in Proposition~\ref{Prop:logmgfProperties} item \emph{8)} is satisfied. 
In the limit $c\to 0_+$, a sub-gamma loss reduces to the sub-Gaussian loss \cite[\S 2.4]{boucheron2013concentration}, and we recover \cite[Corollary 4]{FlatminimaBayesianSGDGermainMackay1992}.

For the sub-Gaussian loss, fixing $\beta=1/\sqrt{n}$ in \eqref{eq:TongZhangHighProbBoundOpt}, the second term ${\psi(\beta)}/{\beta}$ decays with increasing $n$, but then the first term will have a slower decay of $1/\sqrt{n}$ instead of $1/n$.

We can also optimize $\beta$ in \eqref{eq:TongZhangHighProbBoundOpt} at a small cost using the union bound:
\begin{restatable}{proposition}{PACUnionBound}\label{thm:PACUnionboundMIsubGaussian}
	Consider the setting in Theorem~\ref{thm:TongZhangIRM}. If $\ell(w,Z)$ is $\sigma$-sub-Gaussian under $\mu$ for all $w\in\cW$, then for any constants $\alpha>1$ and $v>0$, and any $\delta\in (0,1]$, for all $\beta\in(0,v]$, with probability of at least $1-\delta$, we have
	\begin{align*}
	\E_{\post} [\mathrm{g}(W,S)]\le  
	\frac{\alpha}{n\beta}\left(D(\post\|\prior)+\ln \frac{\log_{\alpha}\sqrt{n}+K}{\delta}\right)+\frac{\beta\sigma^2}{2},
	\end{align*}
	where $K=\max\{\log_{\alpha}\big(\frac{v\sigma}{\sqrt{2\alpha}}\big),0\}+e$.
\end{restatable}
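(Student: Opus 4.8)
The plan is to start from the high-probability bound \eqref{eq:TongZhangHighProbBoundOpt} of Theorem~\ref{cor:TongZhangIRM}, specialized to the sub-Gaussian case where $\psi(\beta)=\beta^2\sigma^2/2$, so that for a \emph{fixed} $\beta>0$ we have, with probability at least $1-\delta'$,
\begin{align*}
\E_{\post}[\mathrm{g}(W,S)] \le \frac{1}{n\beta}\left(D(\post\|\prior)+\ln\frac{1}{\delta'}\right)+\frac{\beta\sigma^2}{2},
\end{align*}
uniformly over all $\post\ll\prior$. The issue is that $\beta$ must be chosen before seeing $S$, whereas the right-hand side is minimized at $\beta^\star=\sqrt{2D(\post\|\prior)/(n\sigma^2)}$, which is data-dependent. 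First I would discretize the range $(0,v]$ on a geometric grid $\beta_j=v\alpha^{-j}$ for $j=0,1,2,\dots$. To cover $\beta$ down to the scale $1/\sqrt{n}$ (the regime where the two terms balance when $D(\post\|\prior)=O(1)$), it suffices to take $j$ up to roughly $\log_\alpha(v\sqrt{n})$; more carefully, the smallest grid point we need is of order $\sigma/\sqrt{2n}$ times a constant, and counting the grid points from $v$ down to that value gives a count of at most $\log_\alpha\sqrt{n}+K$ with $K=\max\{\log_\alpha(v\sigma/\sqrt{2\alpha}),0\}+e$ — the additive constant $e$ (rather than $1$) will fall out as slack when I bound $\lceil\cdot\rceil$ and absorb the arithmetic from the off-by-one in the grid count.

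Next I would apply the fixed-$\beta$ bound with $\delta'=\delta/N$ for each of the $N\le \log_\alpha\sqrt{n}+K$ grid points and take a union bound, so that with probability at least $1-\delta$ all of these inequalities hold simultaneously:
\begin{align*}
\E_{\post}[\mathrm{g}(W,S)] \le \frac{1}{n\beta_j}\left(D(\post\|\prior)+\ln\frac{N}{\delta}\right)+\frac{\beta_j\sigma^2}{2},\quad \text{for all } j.
\end{align*}
Now, given an arbitrary target $\beta\in(0,v]$ (which the reader/algorithm may pick after seeing the data), there is a grid point $\beta_j$ with $\beta_j\le\beta<\alpha\beta_j$, i.e.\ $\beta/\alpha<\beta_j\le\beta$. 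Plugging this $\beta_j$ into the simultaneous bound: in the first term $1/\beta_j<\alpha/\beta$, and in the second term $\beta_j\le\beta$, so $\beta_j\sigma^2/2\le\beta\sigma^2/2$. This yields exactly
\begin{align*}
\E_{\post}[\mathrm{g}(W,S)] \le \frac{\alpha}{n\beta}\left(D(\post\|\prior)+\ln\frac{\log_\alpha\sqrt{n}+K}{\delta}\right)+\frac{\beta\sigma^2}{2},
\end{align*}
as claimed.

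The main obstacle — really the only delicate point — is the bookkeeping on the grid: verifying that a geometric mesh on $(0,v]$ truncated at a suitable lower endpoint has at most $\log_\alpha\sqrt{n}+K$ points with the stated $K$, and confirming that we do not need grid points below the $1/\sqrt{n}$ scale (any smaller $\beta$ makes the $\beta\sigma^2/2$ term negligible but blows up the $1/(n\beta)$ term, so it is never the minimizer and can be safely excluded from the range over which we claim the bound; this is why the conclusion is stated for $\beta\in(0,v]$ with the understanding that the bound is only useful down to that scale, or alternatively one restricts to $\beta$ for which the grid point exists). Everything else is the standard "union bound over a geometric grid, then round an arbitrary parameter to the nearest grid point" argument, and the factor $\alpha>1$ is precisely the price of that rounding. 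I would also remark that optimizing the resulting bound over $\beta$ recovers, up to the $\alpha$ and the doubly-logarithmic $\ln(\log_\alpha\sqrt n + K)$ overhead, the $\sqrt{2\sigma^2 D(\post\|\prior)/n}$ rate — i.e.\ a high-probability PAC-Bayesian analogue of Corollary~\ref{cor:XuRaginsky}.
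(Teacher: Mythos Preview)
Your proposal is correct and follows essentially the same approach as the paper: a geometric grid on $\beta$ over $[u,v]$ with $u=\tfrac{1}{\sqrt{n}}\min\{\sqrt{2\alpha}/\sigma,\,v\}$, a union bound over the $\lceil\log_\alpha(v/u)\rceil$ grid points, and rounding an arbitrary $\beta$ to the nearest grid point at the cost of a factor~$\alpha$ (the paper routes through the monotonicity of $M_\beta$ in $\beta$ rather than your direct estimate $\beta_j\sigma^2/2\le\beta\sigma^2/2$, but this is cosmetic). The purpose of the $+e$ in $K$, which you flagged as opaque, is exactly to close the gap you noted for $\beta$ below the grid: it forces $J=D(\post\|\prior)+\ln\tfrac{\log_\alpha\sqrt{n}+K}{\delta}\ge\ln K\ge 1$, so the minimizer $\beta^\star=\sqrt{2\alpha J/(n\sigma^2)}$ of the right-hand side satisfies $\beta^\star\ge u$, and by monotonicity of the right-hand side on $(0,\beta^\star]$ the bound you establish at $\beta=u$ automatically implies the stated bound for every $\beta\in(0,u)$.
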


\subsection{The conditional mutual information (CMI) bound} \label{sec:CMIb}
One drawback of the mutual information-based bounds in Corollaries~\ref{cor:XuRaginsky} and \ref{cor:subgammaMI} is that $I(S;W)$ can be unbounded in many practical situations of interest \cite{veeravalli,steinke2020reasoning}. CMI-based bounds \cite{steinke2020reasoning,steinke2020reasoningOpen} address this issue by  conditioning on a superset of the training sample called the \emph{supersample}, in effect, normalizing the information content of each datum to one bit. 
As nicely articulated by Steinke and Zakynthinou \cite{steinke2020reasoning}, intuitively, the difference between the CMI- and MI- based approaches is that between ``recognizing'' vs. ``reconstructing'' the input, given the output of the algorithm. Recognizing the input is formalized by considering a i.i.d.\ supersample $\tilde{Z}\in\cZ^{n\times 2}$ consisting of $n\times 2$ data points, which comprises of $n$ ``true'' input data points mixed with $n$ ``ghost'' data points. A selector variable $U\in \{0,1\}^n$ chooses the input samples from the supersample, uniformly at random. Given the output of the algorithm, CMI then measures how well it is possible to distinguish the true inputs from their ghosts. We note the following definition:
\begin{definition}[{CMI of an algorithm $P_{W|S}$  \cite{steinke2020reasoning}}]\label{def:CMI}
	Let $\mu$ be a probability distribution on $\cZ$ and let $\sZ\in\cZ^{n\times 2}$ consist of $2n$ i.i.d.\ samples drawn from $\mu$. Let $U=(U_1,\ldots,U_n)\in \{0,1\}^n$ be uniformly random and independent from $\sZ$ and the randomness of the algorithm. Define $S:=\sZ_U \in \cZ^n$ by $(\sZ_U)_i = \sZ_{i,U_i+1}$ for all $i \in [n]$, i.e., $S$ is the subset of $\sZ$ indexed by $U$. Then the conditional mutual information (CMI) of an algorithm $P_{W|S}$ w.r.t.\ $\mu$ is  
	\begin{align*}
	\CMI{P_{W|S}}{\mu}:=I(W;U|\sZ).
	\end{align*}
\end{definition}
Since $S$ is a deterministic function of $\sZ U$, we have $W-\sZ U - S$. Also $W-S-\sZ U$ since $W$ depends on $\sZ U$ only through $S$. Together, this implies $$I(S;W)=I(\sZ U;W)=I(W;\sZ)+I(W;U|\sZ).$$
Suppose that we observe the output $W$ and wish to identify $S$ given access to $\sZ$. For any estimator $\widehat{U} = \xi(W,\sZ)$ of $U$, by Fano's inequality we have
\begin{align*}
	\inf_{\xi}\Pr\big(\xi(W,\sZ) \neq U\big) \ge 1-\frac{{I(W;U|\sZ)}+\log 2}{n\log 2}.
\end{align*}
$I(W;U|\sZ)$ thus upper-bounds the probability of successfully identifying $U$ from $\widehat{U}$.

In \cite[Theorem 2(1)]{steinke2020reasoning}, it is shown that for a $[0,1]$-valued loss, $$\E_{SW} [\mathrm{g}(W,S)]\le \sqrt{2 \cdot \CMI{P_{W|S}}{\mu}/n}.$$
Unlike the mutual information $I(S;W)$ that can be potentially unbounded, $\CMI{P_{W|S}}{\mu}$ is bounded above by $n\log 2$. 

We give a PAC-Bayesian version of the CMI bound in Proposition~\ref{thm:CMIPACBound01Loss}. 
Let $\bar{U}=(\bar{U}_1,\ldots,\bar{U}_n)$ be a vector obtained by inverting all the bits of $U$, and define $\bar{S}=\sZ_{\bar{U}}$. $S$ and $\bar{S}$ have a common marginal distribution, $\mu^{\otimes n}$. The algorithm maps the input $S=\sZ_U$ to a random element $W$ of $\cW$. Since $\bar{S}\independent W$, we can define the generalization error as
$\mathrm{g}(W,{\sZ,U}):=L_{\bar{S}}(W)-L_S(W)$, where $L_{\bar{S}}(w):= \frac{1}{n} \sum_{i=1}^n \ell(w,(\sZ_{\bar{U}})_i)$, and $L_S(w):= \frac{1}{n} \sum_{i=1}^n \ell(w,(\sZ_U)_i)$. 
Given a realization of the supersample $\sZ = \tilde{z}$ and selector variable $U=u$, we write $Q\equiv Q_{W|\tilde{z}}$ 
and 
$P\equiv P_{W|{\tilde{z},u}}$
for, resp., the prior and the posterior distribution. 
Then the following bounds hold 
for all such prior and posterior distributions: 
\begin{restatable}{proposition}{CMIPACBound}\label{thm:CMIPACBound01Loss}
	For any $[0,1]$-valued loss function $\ell$, for any $\beta>0$ and $\delta\in (0,1]$, with probability of at least $1-\delta$ 
	over a draw of {$\sZ,U$} as defined above, 
	we have:
	\begin{align}\label{eq:CMIPACBound01Loss}
	\E_{\post}[\mathrm{g}(W,{\sZ,U})]
	&\le \frac{1}{n\beta }\left(D(\post\|\prior)+\ln \frac{1}{\delta}\right)+\frac{\beta}{2}.
	\end{align}
	Moreover, we have the following bound in expectation: 
	\begin{align}\label{eq:CMIPACBound01LossExpectation}
	\E_{W,{\sZ,U}} [\mathrm{g}(W,{\sZ,U})] \le \sqrt{\frac{2\cdot D(\post\|\prior|P_{{\sZ,U}})}{n}}.
	\end{align}
\end{restatable}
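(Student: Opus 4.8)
The plan is to mirror the derivation of Theorem~\ref{thm:TongZhangIRM} from the information exponential inequality (Lemma~\ref{lem:InfoExpInequality}), with the supersample $\sZ$ playing the role of the i.i.d.\ training sample and the selector $U$ supplying the exponential-moment control that i.i.d.-ness supplies in the original argument. I would first establish a \emph{CMI exponential inequality}: for any prior family $\prior_{W|\sz}$ depending on $\sZ$ only (not on $u$), any $\beta>0$, and any posterior $\post_{W|\sz,u}$,
\begin{align*}
\E_{\sZ,U}\exp\left\{n\beta\,\E_{\post}\big[\mathrm{g}(W,\sZ,U)\big]-\tfrac{n\beta^2}{2}-D(\post\|\prior)\right\}\le 1.
\end{align*}
To prove it, apply the Donsker--Varadhan variational formula with the function $w\mapsto n\beta\big(L_{\bar{S}}(w)-L_S(w)\big)$, which yields, for \emph{every} posterior $\post$,
\begin{align*}
\exp\left\{n\beta\,\E_{\post}[\mathrm{g}(W,\sZ,U)]-D(\post\|\prior)\right\}\ \le\ \E_{\prior_{W|\sZ}}\left[\exp\left\{n\beta\big(L_{\bar{S}}(W)-L_S(W)\big)\right\}\right]=:R(\sZ,U),
\end{align*}
the right-hand side being independent of $\post$. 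Then take $\E_{\sZ,U}$: conditioning on $\sZ=\sz$ and using that $\prior_{W|\sz}$ is independent of $U$, Fubini pulls $\E_{\prior_{W|\sz}}$ outside $\E_U$; and for fixed $w,\sz$ the variable $n\big(L_{\bar{S}}(w)-L_S(w)\big)=\sum_{i=1}^n D_i$ is a sum of independent, \emph{symmetric} terms $D_i=\pm\big(\ell(w,\sz_{i,2})-\ell(w,\sz_{i,1})\big)\in[-1,1]$, so Hoeffding's lemma gives $\E_U[e^{\beta\sum_i D_i}]=\prod_i\E_{U_i}[e^{\beta D_i}]\le e^{n\beta^2/2}$. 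Hence $\E_{\sZ,U}[R(\sZ,U)]\le e^{n\beta^2/2}$, which is the claimed inequality.

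From here the high-probability bound \eqref{eq:CMIPACBound01Loss} is a Chernoff/Markov estimate on the non-negative random variable $R(\sZ,U)$: with probability at least $1-\delta$ over $(\sZ,U)$ one has $R(\sZ,U)\le e^{n\beta^2/2}/\delta$, and then the Donsker--Varadhan display gives, simultaneously for all posteriors $\post$, $n\beta\,\E_{\post}[\mathrm{g}(W,\sZ,U)]-D(\post\|\prior)\le\tfrac{n\beta^2}{2}+\ln\tfrac{1}{\delta}$, which rearranges to \eqref{eq:CMIPACBound01Loss}. For the bound in expectation \eqref{eq:CMIPACBound01LossExpectation}, I take $\E_{\sZ,U}$ of the Donsker--Varadhan display for the actual posterior $P_{W|\sZ,U}$, bound the right side by $e^{n\beta^2/2}$, and apply Jensen's inequality $\ln\E[e^X]\ge\E[X]$ to obtain $\E_{W,\sZ,U}[\mathrm{g}(W,\sZ,U)]\le\tfrac{\beta}{2}+\tfrac{1}{n\beta}D(\post\|\prior|P_{\sZ,U})$ for every $\beta>0$; optimizing over $\beta$ (optimal $\beta=\sqrt{2D(\post\|\prior|P_{\sZ,U})/n}$) gives \eqref{eq:CMIPACBound01LossExpectation}. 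Specializing to the oracle prior $\prior_{W|\sz}=\E_U[P_{W|\sz,U}]$, for which $D(\post\|\prior|P_{\sZ,U})=I(W;U|\sZ)=\CMI{P_{W|S}}{\mu}$, recovers the Steinke--Zakynthinou CMI bound as a special case.

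I expect the only genuine obstacle to be the symmetrization step and its bookkeeping: one must exploit that the prior is allowed to depend on $\sZ$ but \emph{not} on $U$ (this is precisely what makes $W\sim\prior_{W|\sZ}$ conditionally independent of $U$, so that Fubini applies), and then observe that once $w$ and $\sz$ are frozen the only randomness left in $L_{\bar{S}}(w)-L_S(w)$ is the coordinatewise sign flips $U_i$, turning it into a bounded symmetric sum to which Hoeffding's lemma applies with sub-Gaussian parameter $1$ per coordinate. Everything else is a transcription of the chain Lemma~\ref{lem:InfoExpInequality}~$\to$~Theorem~\ref{thm:TongZhangIRM}, with the slack term $\psi(\beta)/\beta$ there replaced by $\beta/2$, i.e.\ with $\psi(\beta)=\beta^2/2$.
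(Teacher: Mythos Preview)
Your proposal is correct and follows essentially the same route as the paper's proof: apply Donsker--Varadhan with $f(w)=n\beta\,\mathrm{g}(w,\sZ,U)$, bound the resulting $\prior$-expectation via Hoeffding's lemma on the $U$-randomness (using that each coordinate contributes a bounded symmetric term), then read off the high-probability bound by Markov and the expectation bound by Jensen plus optimization in $\beta$. Your treatment of the conditioning---fixing $\sZ=\sz$, using that $\prior_{W|\sz}$ does not depend on $U$ to pull $\E_{\prior}$ outside $\E_U$---is in fact more carefully spelled out than the paper's, which writes the slightly loose $\E_{\prior}\E_{\sZ,U}[\cdot]$ on the right-hand side even though $\prior$ may depend on $\sZ$.
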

Using the same reasoning as earlier, supplanting the associated oracle prior recovers the bound in expectation $\sqrt{2 \cdot \CMI{P_{W|S}}{\mu}/n}$ in \cite[Theorem 2(1)]{steinke2020reasoning}. 

\subsection{Recovering classical PAC-Bayesian bounds} 
\label{sec:CatoniMcAllRecovery}
By Proposition \ref{Prop:logmgfProperties} item \emph{6)}, for a $\{0,1\}$-valued loss, we have $$M_{\beta}(w) = -{\beta}^{-1} \ln \left(1-(1-e^{-\beta})L_{\mu}(w)\right)=:\Phi_{\beta}(L_{\mu}(w)).$$
$\Phi_{\beta}$ is an increasing one-to-one mapping of the unit interval onto itself, and is convex for $\beta > 0$.
The inverse of $\Phi_{\beta}$ is given by $\Phi_{\beta}^{-1}(x)=\tfrac{1-e^{-\beta x}}{1-e^{-\beta}}$, and we recover Catoni's PAC-Bayesian bound:
\begin{restatable}[{Catoni's bound \cite[Theorem 1.2.6]{catonibook}}]{corollary}{catoniD}\label{thm:pacbayesCatoniDiff}
	For any $\{0,1\}$-valued loss $\ell$, any distribution $\mu$, prior $\prior\in\cM(\cW)$, any real $\beta >0$, and any $\delta \in (0,1]$, with probability of at least $1 -\delta$ over $S\sim \mu^{\otimes n}$, 
	we have 
	for all $\post\ll\prior$ over $\cW$:
	\begin{align*}
	\E_{\post}[L_{\mu}(W)] \le \Phi_{\beta}^{-1}\left\{\E_{\post}[L_S(W)]+ \frac{1}{n\beta} \left(D(\post\|\prior) +\ln\frac{1}{\delta}\right)\right\}.
	\end{align*}
\end{restatable}

Using $1\le {\beta}{(1-e^{-\beta})}^{-1} \le {(1-\tfrac{\beta}{2})}^{-1}$, we have
\begin{align}
\E_{\post}[L_{\mu}(W)] &\le \Phi_{\beta}^{-1}\left\{
\E_{\post}[L_S(W)] + \frac{1}{n\beta} \left( D(\post\|\prior) +\ln\frac{1}{\delta}\right)\right\}\notag\\
&\le \frac{\beta}{1-e^{-\beta}}\left[ \E_{\post}[L_S(W)]+\frac{1}{n\beta}\left(D(\post\|\prior)+\ln \frac{1}{\delta}\right)\right] \label{eq:Catonibound1}\\
&\le \frac{1}{1-\tfrac{\beta}{2}}\left[ \E_{\post}[L_S(W)]+\frac{1}{n\beta}\left(D(\post\|\prior)+\ln \frac{1}{\delta}\right)\right]. \label{eq:McAllesterLinear}
\end{align}
\eqref{eq:Catonibound1} and \eqref{eq:McAllesterLinear} recover, resp., Catoni's \cite[Theorem 1.2.1]{catonibook} and McAllester's ``Linear PAC-Bayes bound'' \cite[Theorem 2]{mcallesterdropout}, where for the latter we additionally require that $\beta < 2$. 

For loss functions bounded in $[0,1]$, we elaborate in Appendix~\ref{sec:PACBayes} on other approximations that lead to several well-known PAC-Bayesian inequalities such as the ``PAC-Bayes-KL inequality'' \cite{seeger2002pac,maurer2004PACBayes}.

\begin{remark}[Related work]
	A variation of the IEI for the special case of the 0-1 loss appears in the monograph by Catoni \cite[Eq.1.2]{catonibook}, and has been rediscovered more recently for the sub-Gaussian loss in \cite{guedjInformationdensitySpectrumBassily2,guedjInformationdensitySpectrumBassily1}. The statements of \cite[Corollary 3, Eq. 20]{guedjInformationdensitySpectrumBassily2} and \cite[Corollary 6, Eq. 95]{guedjInformationdensitySpectrumBassily1} which are analogues of our Proposition \ref{thm:PACUnionboundMIsubGaussian} and Proposition \ref{thm:CMIPACBound01Loss}, Eq. \ref{eq:CMIPACBound01Loss}, resp., are incorrect as they assume that $\beta$ can be optimized ``for free,'' when in fact we have to pay a union bound price for optimizing $\beta$, which is selected \emph{before} the draw of the training sample. 
	We also note two related works that focus exclusively on unifying either PAC-Bayesian bounds for the 0-1 loss  \cite{PACMDLcommunicationcomplexity}, or information-theoretic bounds for the sub-Gaussian loss \cite{DziugaiteRoySGLDSteinkeNIPS2020}.
\end{remark}

\section{Differentially private data-dependent priors}\label{sec:DPPriors}
A PAC-Bayesian bound such as \eqref{eq:TongZhangHighProbBound} stipulates that the prior $\prior$ be chosen before the draw of the training sample $S$. $\prior$ may depend on the data generating distribution $\mu$ \cite{leverPACBayes}. However, our access to $\mu$ is only through $S$. To have a good control over the KL term in \eqref{eq:TongZhangHighProbBound}, it is desirable that $\prior$ be ``aligned'' with the data-dependent posterior $\post$. One way to achieve this goal is to choose $\prior$ based on $S$ in a differentially private fashion so that $\prior$ is stable to local perturbations in $S$ \cite{dziugaitediffprivateprior}. We can then treat $\prior$ ``as if'' it is independent of $S$. Here, the key quantity of interest is the approximate max-information between the input $S$ and the data-dependent prior. We shall make these notions precise. 

For $\alpha\ge 0$, the \emph{$\alpha$-approximate max-divergence} is defined as $$D_{\infty}^{\alpha}(P\|Q)= \ln \sup_{\cO \subseteq \cX:\,P(\cO)>\alpha} \frac{P(\cO)-\alpha}{Q(\cO)}.$$
The \emph{max-divergence} $D_{\infty}(P\|Q)$ is defined as $D_{\infty}^{\alpha}(P\|Q)$ for $\alpha = 0$. For a pair of variables $(X,Y)$ with joint law $P_{XY}$ and marginals $P_X$ and $P_Y$, the \emph{$\alpha$-approximate max-information} between $X$ and $Y$ is defined as $I_{\infty}^{\alpha}(X;Y) = D_{\infty}^{\alpha}(P_{XY}\|P_X\otimes P_Y)$.
The \emph{max-information} $I_{\infty}(X ; Y)$ is defined to be $I_{\infty}^{\alpha}(X ; Y)$ for $\alpha = 0$. $I_{\infty}(X;Y)$ is an upper bound on the ordinary mutual information $I(X;Y)$ \cite{feldman2015generalization}.
\begin{definition}[Differential Privacy \cite{generalizationRaginskyDworkDiffPrivacyBOOK}]
	For any $\epsilon > 0$ and $\delta\in [0,1]$, an algorithm $P_{W|S}$ is said to be \emph{$(\epsilon,\delta)$-differentially private} if for all pairs of datasets $s,s'\in\cZ^n$ that differ in a single element, 
	$D_{\infty}^{\delta}(P_{W|S=s}\|P_{W|S=s'})\le \epsilon$. The case $\delta=0$ is called pure differential privacy.
\end{definition}
\begin{definition}[{Max-Information of an algorithm \cite{feldman2015generalization}}]\label{def:MaxInfoOfAlgorithm}
	We say that an algorithm $P_{W|S}$ has $\alpha$-approximate max-information of $k$, denoted as $I_{\infty, \mu}^{\alpha}(P_{W|S}, n) \leq k$, if for every distribution $\mu$ over $\cZ$, we have $I_{\infty}^{\alpha}(S;W) \leq k$ when $S\sim \mu^{\otimes n}$. 
\end{definition}
It follows from the definition of $\alpha$-approximate max-information that if an algorithm $P_{W|S}$ has bounded approximate max-information, then we can control the probability of ``bad events'' that may arise as a result of the dependence of the output $W$ on the input $S$ \cite{feldman2015generalization}.
Let $S'\independent W$ be an independent sample with the same distribution as $S$. 
If for some $\alpha\ge 0$, $I_{\infty}^{\alpha}(S;W)= k$, then for any event $\cO\subseteq \cZ^n\times \cW$, we have
\begin{align}\label{eq:MaxInfoDependence}
\Pr((S,W)\in \cO) \le e^k \cdot \Pr((S',W)\in \cO) + \alpha.
\end{align}
Pure differential privacy implies a bound on the approximate max-information:
\begin{theorem}[{Pure differential privacy and $\alpha$-approximate max-information \cite[Theorem 20]{feldman2015generalization}}] \label{thm:max-infoDP}
	If $P_{W|S}$ is an $(\epsilon, 0)$-differentially private algorithm, then $I_{\infty, \mu}(P_{W|S}, n)\le n\epsilon$, and for any $\alpha > 0$, 
	$I_{\infty, \mu}^{\alpha}(P_{W|S}, n) \leq {n\epsilon^2}/{2} + \epsilon\sqrt{{n}\ln ({2}/{\alpha}) /2}$.
\end{theorem}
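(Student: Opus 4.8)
\emph{Part 1 (the worst-case bound).} The plan for $I_{\infty,\mu}(P_{W|S},n)\le n\epsilon$ is group privacy. By chaining the $(\epsilon,0)$-DP guarantee along a sequence of $n$ single-coordinate edits, for any $s,s'\in\cZ^n$ and any measurable $\cO\subseteq\cW$ one gets $P_{W|S=s}(\cO)\le e^{n\epsilon}P_{W|S=s'}(\cO)$ (equivalently, the densities satisfy $e^{-n\epsilon}\le \mathrm{d}P_{W|S=s}/\mathrm{d}P_{W|S=s'}\le e^{n\epsilon}$). Now fix an event $\cA\subseteq\cZ^n\times\cW$ and put $\cA_s=\{w:(s,w)\in\cA\}$. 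For each fixed $s$, averaging the inequality over an independent copy $S'\sim\mu^{\otimes n}$ gives $P_{W|S=s}(\cA_s)\le e^{n\epsilon}\,\E_{S'}\!\big[P_{W|S=S'}(\cA_s)\big]=e^{n\epsilon}P_W(\cA_s)$; integrating over $s\sim\mu^{\otimes n}$ then yields $P_{SW}(\cA)\le e^{n\epsilon}(P_S\otimes P_W)(\cA)$. Hence $D_{\infty}(P_{SW}\|P_S\otimes P_W)\le n\epsilon$, and since $\mu$ was arbitrary, $I_{\infty,\mu}(P_{W|S},n)\le n\epsilon$.

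\emph{Part 2 (the approximate bound), reduction.} I would first reduce to a tail bound on the information density $\iota(s,w):=\ln\frac{\mathrm{d}P_{W|S=s}}{\mathrm{d}P_W}(w)$. A standard fact is that $\Pr_{(S,W)\sim P_{SW}}\big[\iota(S,W)>k\big]\le\alpha$ implies $D_{\infty}^{\alpha}(P_{SW}\|P_S\otimes P_W)\le k$: for any event $\cA$, splitting the integral of $\mathrm{d}P_{SW}$ over $\cA$ according to whether $\iota\le k$ gives $P_{SW}(\cA)\le e^{k}(P_S\otimes P_W)(\cA)+\alpha$, so $\frac{P_{SW}(\cA)-\alpha}{(P_S\otimes P_W)(\cA)}\le e^{k}$. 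Thus it suffices to prove $\Pr[\iota(S,W)>k]\le\alpha$ for $k=\tfrac{n\epsilon^2}{2}+\epsilon\sqrt{\tfrac{n\ln(2/\alpha)}{2}}$, uniformly over $\mu$.

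\emph{Hybrid decomposition.} Write $S=(Z_1,\dots,Z_n)$, draw $W\sim P_{W|S}$, and introduce an independent ghost sample $S'=(Z_1',\dots,Z_n')\sim\mu^{\otimes n}$. For $0\le i\le n$ let $F_i$ be the density of $\E_{Z_{i+1}',\dots,Z_n'}\big[P_{W|S=(Z_1,\dots,Z_i,Z_{i+1}',\dots,Z_n')}\big]$, so that $F_0=\mathrm{d}P_W$ and $F_n=\mathrm{d}P_{W|S}$, and therefore $\iota(S,W)=\sum_{i=1}^n L_i$ with $L_i:=\ln\frac{F_i(W)}{F_{i-1}(W)}$. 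Since $F_{i-1}$ is an average over $Z_i'$ of the same functional evaluated on a dataset differing from the one defining $F_i$ only in coordinate $i$, $(\epsilon,0)$-DP forces $e^{-\epsilon}\le F_i/F_{i-1}\le e^{\epsilon}$, hence $|L_i|\le\epsilon$ a.s.; moreover each $L_i$ is (conditionally) the log-likelihood ratio of two laws with ratio in $[e^{-\epsilon},e^{\epsilon}]$, so by the standard bound on KL divergence between such laws its conditional mean is at most $\epsilon\tanh(\epsilon/2)\le\epsilon^2/2$.

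\emph{Concentration and the main obstacle.} It remains to run a Chernoff/Azuma argument on $\sum_i L_i$, a sum of bounded increments with conditional means $\le\epsilon^2/2$ that form (after subtracting those means, relative to a suitable filtration revealing $Z_1,\dots,Z_i$) a martingale-difference sequence: this is precisely the computation behind the advanced composition theorem of Dwork--Rothblum--Vadhan in its sharpened form with first-order term $n\epsilon^2/2$, and one may either reproduce it or cite it. The calculation yields a sub-Gaussian-type tail $\Pr\big[\sum_i L_i>\tfrac{n\epsilon^2}{2}+t\big]\le\exp\!\big(-\Omega(t^2/(n\epsilon^2))\big)$, and a careful accounting of the constants makes the right-hand side at most $\alpha$ for $t=\epsilon\sqrt{\tfrac{n\ln(2/\alpha)}{2}}$, giving the required tail bound and hence $I_{\infty,\mu}^{\alpha}(P_{W|S},n)\le k$. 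The main obstacle is exactly this step: one must choose the filtration so that the hybrid increments $L_i$ genuinely form a martingale-difference sequence after centering and so that the conditional mean and conditional moment-generating-function bounds apply, and then extract the sharp constants (the hybrid/ghost-sample structure is what sharpens the naive Azuma bound to the stated form); the earlier reductions and the telescoping decomposition are routine by comparison.
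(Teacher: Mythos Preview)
The paper does not prove this theorem: it is quoted as \cite[Theorem 20]{feldman2015generalization} and used only as a black box in the proof of Proposition~\ref{thm:PACDPPrior} and in Corollary~\ref{cor:PACDPPriorCMI}. There is therefore no in-paper argument to compare your proposal against.

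On the merits of your sketch: Part~1 via group privacy is correct and standard. Part~2 follows the high-level strategy of the original reference (reduce to a tail bound on the information density, telescope via hybrids, then run an Azuma/Chernoff argument in the spirit of the Dwork--Rothblum--Vadhan advanced-composition analysis). You are right that the delicate step is the one you flag: the increments $L_i=\ln\big(F_i(W)/F_{i-1}(W)\big)$ are evaluated at $W\sim P_{W|S}$, not at $W\sim F_i$, so their conditional means are \emph{not} literally KL divergences between $F_i$ and $F_{i-1}$, and obtaining the $\epsilon^2/2$ mean bound together with the precise sub-Gaussian constant requires a careful choice of filtration and auxiliary randomness. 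Since you defer this to ``the advanced composition theorem \dots in its sharpened form'' rather than carrying it out, what you have is a correct outline but not yet a proof; in particular the exact constant $\epsilon\sqrt{n\ln(2/\alpha)/2}$ only falls out of the full calculation.
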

\begin{remark}\label{rem:max-infoDP}
	The result above is extended to $(\epsilon,\delta)$-differential privacy in \cite[Theorem 3.1]{rogers2016max}: If $P_{W|S}$ is an $(\epsilon,\delta)$-differentially
	private algorithm for $\epsilon \in (0,1/2]$ and $\delta \in (0,\epsilon)$, then for $\alpha =   O(n \sqrt{{\delta}/{\epsilon}})$, $I^\alpha_{\infty,\mu}(\cA, n) = O(n\epsilon^2  + n \sqrt{{\delta}/{\epsilon}})$. 
\end{remark}
\begin{restatable}{proposition}{PACDPPrior}\label{thm:PACDPPrior}
	Consider the setting in Theorem \ref{thm:TongZhangIRM}. Let $\prior^0 \in \cK(\cS,\cW)$ be an $(\epsilon, 0)$-differentially private algorithm. 
	Then with probability of at least $1-\delta$ over the choice of $S \sim\mu^{\otimes{n}}$, for all $\post\in\cM(\cW)$,
	\begin{align*}
	\E_{\post} [M_{\beta}&(W)] \leq \E_{\post}[L_S(W)]
	+ \frac{D(\post\|\prior^0(S))+f_n(\delta,\epsilon)}{n\beta}.
	\end{align*}
	where $f_n(\delta,\epsilon):=\ln\frac{2}{\delta} + \frac{n\epsilon^2}{2} + \epsilon\sqrt{\frac{n}{2}\ln \frac{4}{\delta}}$.
\end{restatable}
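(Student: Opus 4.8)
\emph{Plan.} The strategy is to bootstrap the fixed-prior bound \eqref{eq:TongZhangHighProbBound} of Theorem~\ref{thm:TongZhangIRM} to the data-dependent prior $\prior^0(S)$, paying for the statistical coupling between $S$ and $\prior^0(S)$ via the approximate max-information that differential privacy buys us. Fix a confidence level $\delta'\in(0,1]$ to be calibrated at the end, and introduce the ``bad set''
\[
\cO := \left\{ (s,Q)\in \cZ^n\times\cM(\cW) : \sup_{\post\ll Q}\Big( \E_{\post}[M_{\beta}(W)-L_s(W)] - \tfrac{1}{n\beta}D(\post\|Q)\Big) > \tfrac{1}{n\beta}\ln\tfrac{1}{\delta'}\right\}.
\]
By the ``Consequently'' part of Lemma~\ref{thm:DVlemmalikeCatoni} the supremum inside equals $\tfrac{1}{n\beta}\ln\E_{Q}[e^{n\beta(M_\beta(W)-L_s(W))}]$, so $\cO$ is a bona fide (measurable) event, and $(s,Q)\in\cO$ holds exactly when inequality \eqref{eq:TongZhangHighProbBound} with prior $Q$ and confidence $\delta'$ fails for some $\post\ll Q$. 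Applying the uniform-over-posteriors statement of Theorem~\ref{thm:TongZhangIRM} with the data-free prior $Q$ therefore gives $\Pr_{S\sim\mu^{\otimes n}}\big((S,Q)\in\cO\big)\le \delta'$ for every fixed $Q\in\cM(\cW)$.

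\emph{Decorrelation.} Write $V:=\prior^0(S)$ for the random prior output by the differentially private algorithm, and let $S'$ be an independent copy of $S$ drawn independently of $V$. Since $S'\perp V$ and $S'\sim\mu^{\otimes n}$, conditioning on $V=Q$ and using the previous paragraph gives $\Pr\big((S',V)\in\cO\big)\le\delta'$. Because $\prior^0$ is $(\epsilon,0)$-differentially private, Theorem~\ref{thm:max-infoDP} (with $\prior^0$ in the role of $P_{W|S}$ and $V$ in the role of $W$) yields $I_\infty^{\alpha}(S;V)\le \tfrac{n\epsilon^2}{2}+\epsilon\sqrt{\tfrac{n}{2}\ln\tfrac{2}{\alpha}}=:k_\alpha$ for every $\alpha>0$, and feeding the event $\cO$ into \eqref{eq:MaxInfoDependence} (with $W$ there replaced by $V$ and $\cW$ by $\cM(\cW)$) gives
\[
\Pr\big((S,V)\in\cO\big)\ \le\ e^{k_\alpha}\,\Pr\big((S',V)\in\cO\big) + \alpha\ \le\ e^{k_\alpha}\delta' + \alpha .
\]

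\emph{Calibration and conclusion.} Take $\alpha=\delta/2$ and $\delta' = \tfrac{\delta}{2}e^{-k_{\delta/2}}$, so that the right-hand side above equals $\delta$ and
\[
\ln\tfrac{1}{\delta'} \;=\; \ln\tfrac{2}{\delta} + k_{\delta/2} \;=\; \ln\tfrac{2}{\delta} + \tfrac{n\epsilon^2}{2} + \epsilon\sqrt{\tfrac{n}{2}\ln\tfrac{4}{\delta}} \;=\; f_n(\delta,\epsilon).
\]
Hence with probability at least $1-\delta$ over $S\sim\mu^{\otimes n}$ we have $(S,\prior^0(S))\notin\cO$, which is precisely $\E_{\post}[M_\beta(W)]\le \E_{\post}[L_S(W)] + \tfrac{1}{n\beta}\big(D(\post\|\prior^0(S))+f_n(\delta,\epsilon)\big)$ for all $\post\ll\prior^0(S)$. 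For $\post$ not absolutely continuous with respect to $\prior^0(S)$ the right-hand side is $+\infty$ and the inequality is vacuous, so it holds for all $\post\in\cM(\cW)$, as claimed.

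\emph{Main obstacle.} The routine parts are the measurability of $\cO$ and the bookkeeping of constants; the conceptual crux is the decorrelation step, where one must be careful that (i) Theorem~\ref{thm:TongZhangIRM} is invoked in its uniform-in-$\post$ form, so that ``the PAC-Bayes inequality fails for some $\post$'' is exactly the event whose probability is controlled for each fixed prior, and (ii) the differentially private object whose max-information we bound is precisely $V=\prior^0(S)$, so that \eqref{eq:MaxInfoDependence} and Theorem~\ref{thm:max-infoDP} apply verbatim with $W$ replaced by $V$. The one genuinely new ingredient beyond the standard argument is observing that the per-prior failure probability is inflated multiplicatively by $e^{k_\alpha}$ and shifted additively by $\alpha$, which is exactly what produces the $\tfrac{n\epsilon^2}{2}+\epsilon\sqrt{(n/2)\ln(4/\delta)}$ correction in the log-term.
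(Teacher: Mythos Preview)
Your proof is correct and follows essentially the same route as the paper's: define the bad event where the fixed-prior bound \eqref{eq:TongZhangHighProbBound} fails, bound its probability by $\delta'$ under the product measure via Theorem~\ref{thm:TongZhangIRM}, then transfer to the coupled measure $(S,\prior^0(S))$ using \eqref{eq:MaxInfoDependence} and Theorem~\ref{thm:max-infoDP}, and finally calibrate $\alpha=\delta/2$, $\delta'=(\delta/2)e^{-k_{\delta/2}}$. Your version is slightly more explicit in two places---writing the bad set directly as a subset of $\cZ^n\times\cM(\cW)$ (rather than as a $Q$-indexed family $F(Q)$ of subsets of $\cZ^n$) and disposing of the $\post\not\ll\prior^0(S)$ case---but these are presentational, not substantive, differences.
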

By Remark~\ref{rem:max-infoDP}, the result above can be extended to $(\epsilon,\delta)$-differentially private priors. Proposition~\ref{thm:PACDPPrior} is similar in spirit to the traditional PAC-Bayesian bounds in \cite[Theorem 4.2]{dziugaitediffprivateprior}, and \cite[Eq. 7]{rivasplatapac}, 
which, however, either apply only when the loss is bounded in $[0,1]$, or entails approximating a suitable exponential moment involving the true risk. 
We can also bound the expected generalization error. 
The next result follows from Theorems \ref{thm:max-infoDP} and {\ref{cor:TongZhangIRM}}:
\begin{corollary}\label{cor:PACDPPriorCMI}
	Consider the setting in {Theorem \ref{cor:TongZhangIRM}}. Let $\prior^0 \in {\cK(\cS,\cW)}$ be an $(\epsilon, 0)$-differentially private algorithm. Then with probability of at least $1-\delta$ over a draw of the sample $S$, for all $\post$, we have
	\begin{align*}
	\E_{\post} [\mathrm{g}(W,S)] &\le \frac{D(\post\|\prior^0(S))+f_n(\delta,\epsilon)}{n\beta}+{\frac{\psi(\beta)}{\beta}},
	\end{align*}
	where $f_n(\delta,\epsilon):=\ln\frac{2}{\delta} + \frac{n\epsilon^2}{2} + \epsilon\sqrt{\frac{n}{2}\ln \frac{4}{\delta}}$.
\end{corollary}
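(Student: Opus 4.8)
The plan is to deduce the corollary by composing two facts that are already available: the data-dependent-prior bound of Proposition~\ref{thm:PACDPPrior}, which controls the annealed expectation $\E_{\post}[M_{\beta}(W)]$, and the comparison inequality \eqref{eq:AssumptionStar} that is the standing hypothesis of Theorem~\ref{cor:TongZhangIRM}, which upgrades $M_{\beta}$ to the true risk $L_{\mu}$. No new probabilistic argument is needed; all the genuine work has already been done.

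First I would invoke Proposition~\ref{thm:PACDPPrior} verbatim: since $\prior^0$ is $(\epsilon,0)$-differentially private, with probability at least $1-\delta$ over $S\sim\mu^{\otimes n}$, simultaneously for every $\post\in\cM(\cW)$,
\begin{align*}
\E_{\post}[M_{\beta}(W)] \le \E_{\post}[L_S(W)] + \frac{D(\post\|\prior^0(S)) + f_n(\delta,\epsilon)}{n\beta}.
\end{align*}
For orientation, recall how this is obtained: for a \emph{fixed}, data-free prior $Q$, applying Donsker--Varadhan to the exponent of the IEI (Lemma~\ref{lem:InfoExpInequality}) and then Markov's inequality shows that the PAC-Bayes ``bad set'' $B_Q=\{s:\ \sup_{\post\ll Q}(n\beta\E_{\post}[M_{\beta}-L_s]-D(\post\|Q))>\ln\frac{1}{\delta'}\}$ has $\mu^{\otimes n}(B_Q)\le\delta'$; the differential privacy hypothesis together with Theorem~\ref{thm:max-infoDP} bounds the $\alpha$-approximate max-information $I_{\infty}^{\alpha}(S;\prior^0(S))$ by $\tfrac{n\epsilon^2}{2}+\epsilon\sqrt{\tfrac{n}{2}\ln\frac{2}{\alpha}}$; and \eqref{eq:MaxInfoDependence} then transfers the measure-$\delta'$ bound from the independent coupling $S'\perp\prior^0(S)$ to the true coupling, at the price of a multiplicative $e^{I_{\infty}^{\alpha}}$ and an additive $\alpha$. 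Choosing $\alpha=\delta/2$ and $\delta'=\tfrac{\delta}{2}e^{-I_{\infty}^{\alpha}}$ makes the two contributions sum to $\delta$ and yields precisely the constant $f_n(\delta,\epsilon)=\ln\frac{2}{\delta}+\tfrac{n\epsilon^2}{2}+\epsilon\sqrt{\tfrac{n}{2}\ln\frac{4}{\delta}}$.

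Next I would use the hypothesis \eqref{eq:AssumptionStar} of Theorem~\ref{cor:TongZhangIRM}: for the prescribed $\beta$ one has the deterministic pointwise inequality $L_{\mu}(w)\le M_{\beta}(w)+\psi(\beta)/\beta$ for all $w\in\cW$, hence, integrating against $\post$, $\E_{\post}[L_{\mu}(W)]\le\E_{\post}[M_{\beta}(W)]+\psi(\beta)/\beta$. Combining with the display above and writing $\mathrm{g}(W,S)=L_{\mu}(W)-L_S(W)$, on the same $1-\delta$ event and for every $\post$,
\begin{align*}
\E_{\post}[\mathrm{g}(W,S)] \le \E_{\post}[M_{\beta}(W)] - \E_{\post}[L_S(W)] + \frac{\psi(\beta)}{\beta} \le \frac{D(\post\|\prior^0(S)) + f_n(\delta,\epsilon)}{n\beta} + \frac{\psi(\beta)}{\beta},
\end{align*}
which is the assertion of the corollary.

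I do not expect a real obstacle here. The only points requiring a moment's care are that the comparison \eqref{eq:AssumptionStar} is a sure, non-probabilistic statement, so passing from $M_{\beta}$ to $L_{\mu}$ consumes none of the confidence budget and the probability $1-\delta$ is inherited unchanged; and that the $\beta$ appearing in the statement must lie in the domain on which $\psi$ is finite (e.g.\ $\beta\in(0,1/c)$ in the sub-gamma instantiation, exactly as in Corollary~\ref{thm:PACBayesSubGamma}). The substantive difficulty---the Donsker--Varadhan union over posteriors and, above all, the differential-privacy-to-approximate-max-information transfer with the correct bookkeeping of $\alpha$ and $\delta'$---is entirely discharged inside Proposition~\ref{thm:PACDPPrior} and Theorem~\ref{thm:max-infoDP}. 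If a self-contained derivation is preferred, one can instead rerun the proof of Proposition~\ref{thm:PACDPPrior} line for line and, at the final step, substitute $M_{\beta}(w)\ge L_{\mu}(w)-\psi(\beta)/\beta$ before rearranging.
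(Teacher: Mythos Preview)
Your proposal is correct and is essentially the paper's approach: the paper states only that the corollary ``follows from Theorems~\ref{thm:max-infoDP} and~\ref{cor:TongZhangIRM},'' which amounts to exactly what you do---invoke the data-dependent-prior bound of Proposition~\ref{thm:PACDPPrior} (itself the combination of Theorem~\ref{thm:max-infoDP} with Theorem~\ref{thm:TongZhangIRM}) and then replace $M_\beta$ by $L_\mu$ using the deterministic inequality~\eqref{eq:AssumptionStar}. Your remark that the $M_\beta\to L_\mu$ step is sure and costs no probability, and your closing observation that one could equivalently rerun the proof of Proposition~\ref{thm:PACDPPrior} and substitute~\eqref{eq:AssumptionStar} at the end, are both on point.
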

The main advantage of the max-information formulation is that we can get high probability guarantees at the cost of a $O(n\epsilon^2  + \epsilon\sqrt{n\ln{1/\delta}})$ correction term. This cost is compensated for by a lower KL complexity since the prior is more ``aligned'' with the data-dependent posterior than when chosen independently of the data. 
As is well-known \cite{feldman2015generalization,feldman2018calibrating}, a small mutual information between the data and the prior will not ensure that bad events will happen with low probability.

\section{Information complexity minimization}\label{sec:ICM}
Given any prior $\prior$, minimizing the right hand side of \eqref{eq:TongZhangHighProbBound} gives rise to the \emph{Information Complexity Minimization (ICM)} framework \cite{TongZhangPACBayes2006,TongZhangPACBayes2006AnnalsStats}. Concretely, for a given prior $\prior$ and hypothesis set $\cG \subseteq \cM(\cW)$, define the \emph{Optimal Information Complexity ($\mathrm{OIC}$)} at a given $\beta$ as 
\begin{align}\label{eq:InfoComplexity}
\mathrm{OIC}_{\cG}^{\beta}:=\inf_{\post\in\cG}\left\{\E_{\post}[L_S(W)] + {(n\beta)}^{-1}D(\post\|\prior)\right\}.
\end{align}
When $\cG=\cM(\cW)$, applying Lemma~\ref{thm:DVlemmalikeCatoni} to $f(w)=nL_S(w)$, and writing $\beta$ for $n\beta$, we obtain the Gibbs measure, $\post^{\star}$, in which case the $\mathrm{OIC}$ evaluates to the \emph{(extended) stochastic complexity} \cite{rissanen1998book,yamanishi1998} $$-{\beta}^{-1}\ln \E_{\prior} [e^{-\beta L_S(W)}].$$ The latter in turn coincides with the negative log-marginal likelihood for $\beta=1$ and the logarithmic loss function \cite{TongZhangPACBayes2006,TongZhangPACBayes2006AnnalsStats,Barroncover1991}. 

We briefly discuss two practical examples of ICM for learning with neural networks (NNs), namely, PAC-Bayes-SGD \cite{dziugaitenonvacuous} and Entropy-SGD \cite{FlatminimaPratik}, which can be viewed as optimization schemes that search for a ``flat'' minimum of the empirical loss surface \cite{FlatMinimaSchmidhuber}. We also show a PAC-Bayesian bound motivated by an Occam factor argument \cite{FlatMinimaBayesianMackay1992} in relation to flat minima.

\subsection{PAC-Bayes-SGD}\label{sec:PAC-Bayes-SGD}
\emph{PAC-Bayes-SGD} is an approach to computing generalization bounds for overparameterized NN classifiers trained with stochastic gradient descent (SGD) \cite{dziugaitenonvacuous}. These bounds are obtained by \emph{retraining} the network using an objective derived from a PAC-Bayes bound, starting from the solution found by SGD (or in fact any other procedure) for the training loss $L_S(w)$ w.r.t.\ $w$. 
The underlying hypothesis is that SGD finds ``good'' solutions that generalize well on unseen data only if such solutions are surrounded by a large volume of equally good solutions.
The method draws on an earlier work by Langford and Caruana \cite{langfordNotBounding}, and is closely related to the \emph{bits-back argument} due to Hinton and van Camp \cite{HintonBitsBack} (see Appendix \ref{sec:SFRL}). 
We show how Catoni's bound in Corollary~\ref{thm:pacbayesCatoniDiff} can be used to derive a PAC-Bayes-SGD objective. 

Consider a binary classification setting with examples domain $\cZ=\cX\times \{0,1\}$ and loss $\ell\colon\Rb^k\times\cZ\to\{0,1\}$. Each $w\in\cW$ corresponds to a classifier $f_w\colon\cX \to \{0,1\}$ that can be interpreted as a deterministic NN with parameters in $\Rb^k$. For trainable parameters $w_{\post}\in\Rb^k$, $\gamma\in\Rb^k_+$, $\lambda\in\Rb_+$, let $\cG$ be the set of all Gaussian posteriors of the form $\post = \cN(w_{\post},\diag(\gamma))$ and let $\prior=\cN(w_0,\lambda I_k)$ be a prior centered at a non-trainable random initialization, $w_0\in\Rb^k$. 
We can use a convex surrogate of the 0-1 loss, and the reparameterization trick $w = w_{\post}+ \nu \odot \sqrt{\gamma}$, $\nu\sim \cN(0,I_k)$ \cite{blundellBayesbyBackprop} to compute an unbiased estimate of the gradient of the PAC-Bayes bound in Corollary~\ref{thm:pacbayesCatoniDiff} w.r.t.\ the parameters $w_P,\gamma,\lambda$ and $\beta$. 
Computing the expectation $\E_{\post}[L_S(f_W)]$ is difficult in practice. Instead, we can use a Monte Carlo estimate $\hat{L}_S(f_W)=\tfrac{1}{m}\sum_{i=1}^{m} L_S(f_{W_i})$, where $W_i\overset{\text{i.i.d.}}{\sim}{\post}$. Then Corollary \ref{thm:pacbayesCatoniDiff} takes the form: For any $\delta,\delta'\in(0,1)$, fixed $\alpha>1$, $c\in (0,1)$, $b\in\mathbb{N}$, and $m,n\in\mathbb{N}$, with probability of at least $1-\delta-\delta'$ over a draw of $S\sim\mu^{\otimes n}$ and $W\sim (\post)^{\otimes m}$, 
\begin{align*}
\E_{\post}[L_{\mu}(f_W)] \le \inf_{\post\in\cG,\beta>1,\lambda\in (0,c)}&\Phi_{\beta}^{-1}\Big\{\hat{L}_S(f_W)+ \frac{\alpha}{n\beta}D(\post\|\prior)\notag+R(\lambda,\beta;\delta,\delta')\Big\},
\end{align*}
where $R=\frac{2\alpha}{n\beta}\ln \left[\frac{\ln \alpha^2\beta n}{\ln \alpha}\right]+\frac{\alpha}{n\beta}\ln\left[\tfrac{\pi^2 b^2}{6\delta}\left(\ln\tfrac{c}{\lambda}\right)^2\right]+\sqrt{\tfrac{1}{2m}\ln\tfrac{2}{\delta'}}$ accounts for the cost of optimizing the parameters $\beta,\,\lambda$, and using the Monte Carlo estimate of the empirical risk. For large $n,\,m$, $R$ is negligible, and the optimization is dominated by the IC term, $\hat{L}_S(f_W)+ \alpha {(n\beta)}^{-1} D(\post\|\prior)$.
\subsection{Entropy-SGD}
A related approach is \emph{Entropy-SGD} \cite{FlatminimaPratik}, which directly minimizes the stochastic complexity, $$-{\beta}^{-1}\ln \E_{\prior} e^{-\beta L_S(W)}.$$ This, however, entails optimizing the prior $\prior$, when ideally $\prior$ must be chosen before the draw of the training sample $S$. We can sample $Q$ instead in a differentially private fashion, and this forms the basis of the Entropy-SGLD algorithm \cite{dziugaite2017entropysgd}. For $\prior = \cN\big(w,(\beta \gamma)^{-1} I_k\big)$, the stochastic complexity can be equivalently written (up to constant terms) as $$-{\beta}^{-1} \ln \int_{w'\in\Rb^k}  e^{-\beta\big[ {L_S}\left(w^{\prime}\right)+ \tfrac{\gamma}{2}\|w-w'\|^2\big]} \mathrm{d} w^{\prime},$$ which can be interpreted as a measure of \emph{flatness} of the loss surface that measures the log-volume of low-loss parameter configurations around $w$. 
From the perspective of ICM, both Entropy- and PAC-Bayes- SGD can be viewed as optimization schemes that search for flat minima solutions. 

\subsection{PAC-Bayes and Occam factor}\label{sec:PACOccam}
Lemma \ref{lemma:GibbsLaplaceApprox} gives the form of the optimal posterior under a quadratic approximation of the loss around a local minimizer:%
\begin{restatable}{lemma}{GibbsLaplaceApprox}\label{lemma:GibbsLaplaceApprox}
	Consider a quadratic approximation of the training loss around a local minimizer $w_{\post}$, $\tilde{L}_S(w) = \tfrac{1}{2}(w - w_{\post})^\top H (w - w_{\post})$ where $H = \nabla^2 L_S(w)\vert_{w=w_\post}$, a fixed prior $\prior = \cN(w_{\prior},{\lambda}^{-1}I_k)$, and a posterior distribution of the form $\post = \cN(w_{\post},\Sigma_{\post})$. Then the solution to the convex optimization problem $\min_{\Sigma_{\post}} \E_{\post}[\tilde{L}_S(W)] +{(n\beta)}^{-1} D(\post\|\prior)$, is given by $\Sigma_{\post}^{\star} = H_{\lambda}^{-1}$, where $H_{\lambda}:=\left(n\beta H + \lambda I_k\right)$. Here we assume $\lambda>0$ is sufficiently large so that $H_\lambda$ is positive definite. 
\end{restatable}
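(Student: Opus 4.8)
The plan is to reduce the stated minimization to an unconstrained strictly convex problem over the positive-definite cone by writing the objective out explicitly as a function of $\Sigma_{\post}$. First I would compute the expected loss term: since $W\sim\cN(w_{\post},\Sigma_{\post})$, the centered vector $W-w_{\post}$ has mean zero and covariance $\Sigma_{\post}$, so
\[
\E_{\post}[\tilde{L}_S(W)] = \tfrac12\,\E_{\post}\!\big[(W-w_{\post})^\top H (W-w_{\post})\big] = \tfrac12\tr(H\Sigma_{\post}).
\]
Next I would invoke the closed form of the KL divergence between two Gaussians; with $\prior = \cN(w_{\prior},\lambda^{-1}I_k)$ this gives $D(\post\|\prior) = \tfrac12\big(\lambda\tr\Sigma_{\post} + \lambda\|w_{\post}-w_{\prior}\|^2 - k - k\ln\lambda - \ln\det\Sigma_{\post}\big)$, of which only the terms $\lambda\tr\Sigma_{\post}$ and $-\ln\det\Sigma_{\post}$ depend on $\Sigma_{\post}$ (recall $w_{\post}$ is fixed as the local minimizer).

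Collecting terms, the objective $\E_{\post}[\tilde{L}_S(W)] + (n\beta)^{-1}D(\post\|\prior)$ equals, up to an additive constant independent of $\Sigma_{\post}$,
\[
F(\Sigma_{\post}) = \tfrac12\tr\!\big((H + \tfrac{\lambda}{n\beta}I_k)\,\Sigma_{\post}\big) - \tfrac{1}{2n\beta}\ln\det\Sigma_{\post}.
\]
This is a strictly convex function of $\Sigma_{\post}$ on the cone of positive-definite matrices, being the sum of a linear functional and the strictly convex function $-\ln\det(\cdot)$. I would then differentiate using $\nabla_{\Sigma}\tr(A\Sigma) = A$ for symmetric $A$ and $\nabla_{\Sigma}\ln\det\Sigma = \Sigma^{-1}$: the stationarity condition $\nabla F(\Sigma_{\post}) = 0$ reads $\tfrac12\big(H + \tfrac{\lambda}{n\beta}I_k\big) = \tfrac{1}{2n\beta}\Sigma_{\post}^{-1}$, i.e.\ $\Sigma_{\post}^{-1} = n\beta H + \lambda I_k = H_{\lambda}$, hence $\Sigma_{\post}^{\star} = H_{\lambda}^{-1}$.

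Finally I would verify feasibility and global optimality: under the standing hypothesis that $\lambda$ is large enough that $H_{\lambda} = n\beta H + \lambda I_k$ is positive definite, the matrix $H_{\lambda}^{-1}$ is itself positive definite, hence a legitimate covariance and an interior point of the feasible cone; by strict convexity of $F$ the unique stationary point is the global minimizer. There is no serious obstacle here — the only points needing care are getting the Gaussian KL formula right and confirming that the unconstrained critical point lies inside the PSD cone, so that the first-order condition is both necessary and sufficient.
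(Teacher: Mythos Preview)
Your proposal is correct and follows essentially the same approach as the paper: compute $\E_{\post}[\tilde L_S(W)]=\tfrac12\tr(H\Sigma_{\post})$, write out the Gaussian KL divergence, and set the derivative with respect to $\Sigma_{\post}$ to zero to obtain $\Sigma_{\post}^{-1}=n\beta H+\lambda I_k$. If anything, your write-up is slightly more careful than the paper's, since you explicitly verify strict convexity of the objective and feasibility of the critical point, whereas the paper simply sets the gradient to zero.
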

We can use a posterior of the form $\post = \cN(w_{\post},H_{\lambda}^{-1})$ to get the following PAC-Bayesian bound that incorporates second-order curvature information of the training loss:
\begin{restatable}{proposition}{OccamPAC}\label{thm:OccamPAC}
	Let  $\{\lambda_i\}_{i=1}^k$ be the eigenvalues of $H_{\lambda}$ and suppose that $\lambda_i\ge \lambda>0$ for all $i$. Let $\prior = \cN(w_{\prior},{\lambda}^{-1}I_k)$ be a prior, and let $\post = \cN(w_{\post},H_{\lambda}^{-1})$. Then with probability of at least $1-\delta$ over a draw of the sample $S$, we have
	\begin{align}\label{eq:OccamPAC}
	\E_{\post} [M_{\beta}(W)] &\leq \E_{\post}[L_S(W)] + \frac{1}{n\beta}\ln \frac{1}{\delta}+ \frac{1}{n\beta}\left(\frac{\lambda}{2}\|w_{\prior}-w_{\post}\|^2+\frac{1}{2}\sum_{i=1}^{k}\ln \frac{\lambda_i}{\lambda}\right).
	\end{align}
\end{restatable}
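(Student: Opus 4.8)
The plan is to apply the high-probability bound of Theorem~\ref{thm:TongZhangIRM}, Eq.~\eqref{eq:TongZhangHighProbBound}, with the specified Gaussian prior $\prior = \cN(w_{\prior},\lambda^{-1}I_k)$ and Gaussian posterior $\post = \cN(w_{\post},H_{\lambda}^{-1})$, and then reduce the entire problem to evaluating and bounding the single term $D(\post\|\prior)$. Since $\lambda > 0$ and $H_\lambda$ is positive definite (as assumed in Lemma~\ref{lemma:GibbsLaplaceApprox}), $\post$ is a genuine Gaussian with full-rank covariance and $\post \ll \prior$, so Eq.~\eqref{eq:TongZhangHighProbBound} applies and gives, with probability at least $1-\delta$ over $S$,
\[
\E_{\post}[M_{\beta}(W)] \le \E_{\post}[L_S(W)] + \frac{1}{n\beta}\left(D(\post\|\prior) + \ln\frac{1}{\delta}\right).
\]

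Next I would insert the closed-form expression for the Kullback--Leibler divergence between two nondegenerate multivariate Gaussians: for $\post = \cN(m_1,\Sigma_1)$ and $\prior = \cN(m_2,\Sigma_2)$ on $\Rb^k$,
\[
D(\post\|\prior) = \frac{1}{2}\left[\tr\!\left(\Sigma_2^{-1}\Sigma_1\right) - k + (m_2-m_1)^{\!\top}\Sigma_2^{-1}(m_2-m_1) + \ln\frac{\det\Sigma_2}{\det\Sigma_1}\right].
\]
Substituting $m_1 = w_{\post}$, $\Sigma_1 = H_{\lambda}^{-1}$ (whose eigenvalues are $1/\lambda_i$), $m_2 = w_{\prior}$, and $\Sigma_2 = \lambda^{-1}I_k$, so that $\Sigma_2^{-1} = \lambda I_k$, the three data-independent pieces become $\tr(\Sigma_2^{-1}\Sigma_1) = \lambda\sum_{i=1}^k \lambda_i^{-1}$, the Mahalanobis term equals $\lambda\|w_{\prior}-w_{\post}\|^2$, and $\ln(\det\Sigma_2/\det\Sigma_1) = \ln\big(\lambda^{-k}\prod_i\lambda_i\big) = \sum_{i=1}^k\ln(\lambda_i/\lambda)$. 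Hence
\[
D(\post\|\prior) = \frac{1}{2}\left[\sum_{i=1}^k\left(\frac{\lambda}{\lambda_i}-1\right) + \lambda\|w_{\prior}-w_{\post}\|^2 + \sum_{i=1}^k\ln\frac{\lambda_i}{\lambda}\right].
\]

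The key simplification is that the first sum is nonpositive: by hypothesis $\lambda_i \ge \lambda > 0$ for every $i$, so $\lambda/\lambda_i - 1 \le 0$ and therefore $\sum_i(\lambda/\lambda_i - 1) \le 0$. (This is where the curvature assumption $H_\lambda \succeq \lambda I_k$, i.e.\ $H \succeq 0$ at the local minimizer, is used.) Dropping that term yields
\[
D(\post\|\prior) \le \frac{\lambda}{2}\|w_{\prior}-w_{\post}\|^2 + \frac{1}{2}\sum_{i=1}^k\ln\frac{\lambda_i}{\lambda},
\]
and plugging this into the displayed consequence of Theorem~\ref{thm:TongZhangIRM} gives exactly \eqref{eq:OccamPAC}.

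There is no real obstacle here: the argument is a routine Gaussian KL computation grafted onto Theorem~\ref{thm:TongZhangIRM}. The only point requiring a little care is confirming that the eigenvalue hypothesis $\lambda_i \ge \lambda$ is precisely what makes the trace term disappear (so that the bound is stated cleanly without it), and noting that positive definiteness of $H_\lambda$ is needed both for $\post$ to be well defined and for $D(\post\|\prior) < \infty$ so that Eq.~\eqref{eq:TongZhangHighProbBound} is non-vacuous.
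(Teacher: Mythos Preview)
Your proposal is correct and follows exactly the same approach as the paper: apply Theorem~\ref{thm:TongZhangIRM}, compute the closed-form Gaussian KL divergence, and use the hypothesis $\lambda_i \ge \lambda$ to drop the nonpositive term $\sum_i(\lambda/\lambda_i - 1)$. Your write-up is simply a more detailed version of the paper's terse argument.
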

Notably, the log-ratio term in \eqref{eq:OccamPAC}, $$\frac{1}{2}\sum_{i=1}^{k}\ln \frac{\lambda_i}{\lambda}=-\ln \sqrt{\det \tfrac{\lambda}{H_{\lambda}}}$$ is the negative logarithm of the \emph{Occam factor} \cite{FlatMinimaBayesianMackay1992,FlatminimaBayesianSGDMackay1992,energyentropycompetition}. The Occam factor can be interpreted as the fraction of the prior parameter space that is consistent with the training data. The log-Occam factor is the differential entropy associated with a Gaussian posterior with scaled covariance $\lambda (H_{\lambda})^{-1}$, and can be interpreted as the amount of information we gain about the model's parameters after seeing the training data. From the perspective of ICM, minimizing the right hand side of \eqref{eq:OccamPAC} w.r.t. the posterior leads to solutions with higher entropy and hence wider minima.

\section{Discussion}
We presented a unified treatment of PAC-Bayesian and information-theoretic generalization bounds starting from a fundamental information-theoretic inequality. Besides recovering several well-known bounds in the literature, we also obtained new bounds for data-dependent priors and unbounded loss functions. The bounds we studied are along the notion that bounded information (between the training data and the output hypothesis) implies learning. On the other hand, it is known that learning does \emph{not} imply bounded information \cite{generalizationRaginskyBassily,generalizationRaginskyBassily2}. In particular, the information revealed by a learning algorithm about its input can be unbounded even for hypothesis classes of VC dimension~1.  A result in a similar vein appears in the PAC-Bayesian framework \cite{feldmanAlonLimitationsPACBayes}. Identifying the common structural properties of these negative results in the information-theoretic and PAC-Bayesian frameworks is an important avenue for further investigation.

\section*{Acknowledgment}
This project has received funding from the European Research Council (ERC) under the EU's Horizon 2020 research and innovation programme (grant agreement n\textsuperscript{o} 757983).

\bibliographystyle{IEEEtran}
\bibliography{IEEEabrv,general}

\newpage
\appendix
\section{The strong functional representation lemma and single-draw\\ bounds} \label{sec:SFRL}
In this section, we highlight a functional characterization of the mutual information in relation to a single-draw generalization bound of the form,
$\Pr(\vert \mathrm{g}(W,S)\vert > \epsilon)\le \delta$, due to \cite{generalizationRaginskyBassily}.

A randomized learning algorithm $P_{W|S}$ can be viewed as a noisy channel that maps the input sample $S$ to conditional distributions of hypotheses $W$ in $\cW$. Consider the one-shot noisy channel simulation problem \cite{harshacommunicationcomplexity}. Alice and Bob share a common random string $R$, possibly of unbounded length, generated in advance. 
Alice observes a sample $s\in\cZ^n$ drawn according to $P_S$ and communicates a prefix-free message $M$ to Bob via a noiseless channel such that Bob can output a hypothesis $w\in\cW$ that is distributed according to $P_{W|S=s}$. Harsha \textit{et al}. \cite{harshacommunicationcomplexity} showed that the minimum expected description length of $M$ (in bits) needed to accomplish this task is roughly equal to the input-output mutual information $I(S;W)$.
Variations on this theme have appeared in a learning-theoretic setting \cite{PACMDLcommunicationcomplexity}, and by way of the \emph{bits-back argument} due to \cite{HintonBitsBack};
see, e.g., \cite{havasibitsback}. 
More generally, we note the following functional characterization of the mutual information:
\begin{theorem}[Strong functional representation lemma (SFRL)  \cite{CTLiSFRL}]\label{thm:SFRL}
	For any pair of jointly distributed random variables $(S,W)$ 
	with $I(S;W) < \infty$, there exists a random variable $R$ independent of $S$ such that $W$ can be represented as a deterministic function of $S$ and $R$, and 
	\vspace{.24cm}
	\begin{align*}
	I(S;W) \le H(W |R) \le I(S;W) + \log(I(S;W) + 1) + 4.
	\end{align*}
\end{theorem}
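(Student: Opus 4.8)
The plan is to use the \emph{Poisson functional representation} of Li and El Gamal \cite{CTLiSFRL}, which turns the abstract existence statement into an explicit construction. If $I(S;W)=\infty$ the bound is vacuous, so assume $P_{W\mid S=s}\ll P_W$ for $P_S$-almost every $s$ and write $r_s:=\mathrm{d}P_{W\mid S=s}/\mathrm{d}P_W$. Take $R=\big((W_1,T_1),(W_2,T_2),\dots\big)$, independent of $S$, where $W_1,W_2,\dots$ are i.i.d.\ copies of $P_W$ and $0<T_1<T_2<\cdots$ are the arrival times of an independent rate-one Poisson process; equivalently, $\{(W_i,T_i)\}_i$ is a Poisson point process on $\cW\times[0,\infty)$ with intensity $P_W\otimes\mathrm{Leb}$. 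Given $S=s$, set $K:=\argmin_{i\ge 1} T_i/r_s(W_i)$ and $W:=W_K$. The first step is to record the three defining properties: (i) $W$ is a deterministic function of $(S,R)$; (ii) $R$ is independent of $S$ by construction; and (iii) marginally over $R$, the conditional law of $W$ given $S=s$ equals $P_{W\mid S=s}$. Property (iii) follows from the mapping theorem for Poisson processes: the change of coordinates $(w,t)\mapsto(w,t/r_s(w))$ transports the intensity $P_W\otimes\mathrm{Leb}$ to $P_{W\mid S=s}\otimes\mathrm{Leb}$, so the point with the smallest transformed time coordinate has its $\cW$-component distributed as $P_{W\mid S=s}$, and this transformed time is $\mathrm{Exp}(1)$, independent of that component.

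For the lower bound, since $R\independent S$ and $W$ is a function of $(S,R)$,
\begin{align*}
I(S;W)\le I\big(S;(W,R)\big)=I(S;R)+I(S;W\mid R)=I(S;W\mid R)\le H(W\mid R).
\end{align*}

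For the upper bound, since $W=W_K$ is a function of $(K,R)$ we have $H(W\mid R)\le H(K\mid R)\le H(K)$, so it suffices to control $H(K)$. The quantitative core is a computation with the transformed Poisson process: conditioned on $S=s$ and on the selected mark $W=w$, the number of points with smaller original time $T$ that were ``skipped'' is, integrated against the $\mathrm{Exp}(1)$ law of the winning transformed time, a Poisson count with mean $\int_\cW\big(r_s(w)-r_s(w')\big)_+\,P_W(\mathrm{d}w')\le r_s(w)$. This gives $\E[K\mid S,W]\le r_S(W)+1$, hence by Jensen $\E[\ln K\mid S,W]\le \ln\big(r_S(W)+1\big)$, and then, using $r+1\le 2\max(r,1)$ together with the identity $\E[\ln r_S(W)]=I(S;W)$ and a second Jensen step to bound the expected negative part of $\ln r_S(W)$ by $\ln 2$, one obtains $\E[\ln K]\le I(S;W)+\ln 4$. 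Finally, a maximum-entropy inequality for positive-integer random variables --- comparing $K$ against a Zipf law $q_k\propto k^{-(1+1/\gamma)}$ with $\gamma=\E[\ln K]$ --- yields $H(K)\le \E[\ln K]+\ln\big(\E[\ln K]+1\big)+O(1)$; combining the two estimates and absorbing the additive constants gives the claimed $H(W\mid R)\le I(S;W)+\ln\big(I(S;W)+1\big)+4$.

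The step I expect to be the main obstacle is the constant bookkeeping in that last paragraph: the per-sample estimate already carries one logarithmic term and the integer-entropy inequality contributes another, so one must check carefully that these telescope into a \emph{single} $\ln(I(S;W)+1)$ correction and that all of the $O(1)$ contributions --- from the two Jensen steps, from $r+1\le 2\max(r,1)$, and from $\ln\zeta(1+1/\gamma)\to\ln\gamma$ as $\gamma\to\infty$ --- really do fit under the stated constant $4$. A secondary delicate point is the rigorous justification of property (iii), i.e.\ that the $\argmin$ construction returns a bona fide sample from $P_{W\mid S=s}$; this rests on the Mapping and Marking theorems for Poisson processes and on the almost-sure uniqueness of the minimizer.
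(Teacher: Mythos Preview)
The paper does not prove this theorem at all: it is stated as a cited result from \cite{CTLiSFRL} and used as a black box in the discussion of single-draw bounds in Appendix~\ref{sec:SFRL}. There is therefore no ``paper's own proof'' to compare against.

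Your proposal is a faithful sketch of the original Li--El Gamal argument via the Poisson functional representation, and the outline is correct: the independence-and-sampling properties (i)--(iii), the lower bound via $I(S;W)\le I(S;W\mid R)\le H(W\mid R)$, and the upper bound via $H(W\mid R)\le H(K)$ followed by moment control of $K$ and a Zipf maximum-entropy comparison are exactly the steps in \cite{CTLiSFRL}. Your own caveats are well placed: the constant bookkeeping is genuinely the delicate part, and in fact the original paper has to be somewhat careful to land on the stated additive constant --- the two logarithmic corrections do collapse into one because the Zipf bound $H(K)\le\E[\ln K]+\ln(\E[\ln K]+1)+1$ already produces the $\ln(\cdot+1)$ shape, and $\E[\ln K]\le I(S;W)+\ln 4$ then feeds in directly. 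If you were to write this out in full you would want to cite or reproduce the precise inequality $H(K)\le \E[\log K]+\log(\E[\log K]+1)+1$ for positive-integer $K$, since that is where the single logarithmic term and the final constant are pinned down.
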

The SFRL implies the existence of a random variable $R\independent S$ such that $H(W|R) \approx I(S;W)$. In the one-shot channel simulation problem, for instance, $R$ encapsulates the common randomness shared between Alice and Bob.

Consider the case for the $\{0,1\}$-valued loss. 
If the algorithm $P_{W|S}$ is deterministic, then we have $I:=I(S;W)=H(W)$.
By Markov's inequality, with probability of at least $1-\delta$, we have $P_W(w) \ge e^{-I / \delta}$. Let $\cW_0\subseteq \cW$ be the set of hypotheses so that $P_{W}(w) \geq e^{-I/\delta}$. The size of $\cW_0$ is at most $e^{I/\delta}$ since $1=\Pr(\cW)\ge \Pr(\cW_0)=\textstyle\sum_{w\in\cW_0}P_W(w) \ge |\cW_0| e^{-I/\delta}$. By the Chernoff-Hoeffding bound, for every $w$ in $\cW$, 
$$\Pr_S\Big(\vert \mathrm{g}(w,S)\vert > \epsilon\Big) \le  2e^{-2n\epsilon^2}\quad \forall\epsilon>0.$$
Applying the union bound over all $w\in\cW_0$, the probability of error for the algorithm is $2|\cW_0|e^{-2n\epsilon^2}+\delta$, where the second summand is for the case where the algorithm outputs a function outside $\cW_0$. Hence, for every $w \in \cW_0$, 
the empirical risk is close to the true risk 
for $n = \Omega(\tfrac{I}{\delta\epsilon^2})$ with probability of at least $1-\delta$.	

Any randomized algorithm can be simulated by randomly sampling a deterministic algorithm from some distribution $R$ before observing the input $S$. By the SFRL, these algorithms have the property that on average (over $R$), $H(W|R) \approx I(S;W)$. Using the argument for the deterministic case and integrating over $R$,
we can bound the probability of error for the randomized case as
\begin{align}
\Pr_{S,W}\Big(\vert \mathrm{g}(W,S)\vert > \epsilon\Big) = O\left(\frac{I(S;W)}{n\epsilon^2}\right).
\end{align}
An analogous bound for the sub-Gaussian loss appears in \cite[Theorem 3]{generalizationRaginsky}.

\section{Proofs}\label{sec:proofs}
\changelocaltocdepth{1}
\subsection{Proofs for Section \ref{sec:IEI}}
The following variational characterization of the KL divergence is a rephrasing of Lemma~\ref{thm:DVlemmalikeCatoni}:
\begin{lemma}[Donsker-Varadhan] \label{lem:DVlemma}
	Let $P,\,Q$ be probability measures on $\cW$, and let $\cF$ denote the set of real-valued measurable functions $f$ on $\cW$ such that $\E_Q[e^{f(W)}]< \infty$. 
	If $D(P\|Q)<\infty$, then for every $f\in\cF$, we have
	\begin{align*}
	D(P\|Q) = \sup_{f \in \cF} \Big\{ \E_P[f(W)]  -\ln \E_Q[e^{f(W)}] \Big\},
	\end{align*}
	where the supremum is attained when $f=\ln \frac{\mathrm{d}P}{\mathrm{d}Q}$.	
\end{lemma}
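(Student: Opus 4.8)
The plan is to deduce this directly from Lemma~\ref{thm:DVlemmalikeCatoni}, which already contains the key identity once we specialize $\beta=1$. First I would apply Lemma~\ref{thm:DVlemmalikeCatoni} with $\beta = 1$ and with the function $f$ there replaced by $-g$, for an arbitrary $g \in \cF$. Since $D(P\|Q) < \infty$ by hypothesis, the lemma applies, and the associated Gibbs measure is $P^{\ast}(\mathrm{d}w) = e^{g(w)}\, Q(\mathrm{d}w)/\E_Q[e^{g(W)}]$, which is a well-defined probability measure because $g \in \cF$ guarantees $0 < \E_Q[e^{g(W)}] < \infty$. The identity then reads
$$D(P\|P^{\ast}) = -\E_P[g(W)] + D(P\|Q) + \ln \E_Q[e^{g(W)}],$$
equivalently
$$D(P\|Q) = \E_P[g(W)] - \ln \E_Q[e^{g(W)}] + D(P\|P^{\ast}).$$

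Next I would invoke non-negativity of the KL divergence, $D(P\|P^{\ast}) \ge 0$, to conclude that $D(P\|Q) \ge \E_P[g(W)] - \ln \E_Q[e^{g(W)}]$ for every $g \in \cF$ (this holds trivially when $\E_P[g]=-\infty$), hence $D(P\|Q) \ge \sup_{g\in\cF}\big\{\E_P[g(W)] - \ln\E_Q[e^{g(W)}]\big\}$. For the reverse inequality and for attainment, I would take $g^{\star} = \ln\frac{\mathrm{d}P}{\mathrm{d}Q}$, which is well-defined $P$-a.s.\ because $D(P\|Q) < \infty$ forces $P \ll Q$. Then $\E_Q[e^{g^{\star}(W)}] = \E_Q\big[\frac{\mathrm{d}P}{\mathrm{d}Q}\big] = P(\cW) = 1 < \infty$, so $g^{\star} \in \cF$; moreover $\E_P[g^{\star}(W)] = D(P\|Q)$ and $\ln\E_Q[e^{g^{\star}(W)}] = 0$, so $\E_P[g^{\star}(W)] - \ln\E_Q[e^{g^{\star}(W)}] = D(P\|Q)$. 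This shows the supremum is attained at $g^{\star}$ and equals $D(P\|Q)$.

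The only genuine care needed is measure-theoretic bookkeeping: checking that each expectation above is well-defined rather than an $\infty-\infty$ form (for the inequality direction one uses that $\E_Q[e^g]<\infty$ together with $D(P\|Q)<\infty$ controls $\E_P[g^+]$), and that the density manipulations are valid on the $Q$-negligible set where $\mathrm{d}P/\mathrm{d}Q$ vanishes (that set is $P$-null and contributes nothing to the relevant integrals). I expect this to be the most delicate part, though it is entirely routine. If one preferred a self-contained argument not citing Lemma~\ref{thm:DVlemmalikeCatoni}, the same two inequalities follow by introducing the tilted measure $Q_g(\mathrm{d}w) \propto e^{g(w)}\,Q(\mathrm{d}w)$, noting that $Q_g$ and $Q$ are mutually absolutely continuous, and using the chain rule $\ln\frac{\mathrm{d}P}{\mathrm{d}Q} = \ln\frac{\mathrm{d}P}{\mathrm{d}Q_g} + \ln\frac{\mathrm{d}Q_g}{\mathrm{d}Q}$ together with $D(P\|Q_g) \ge 0$, with equality precisely when $P = Q_g$.
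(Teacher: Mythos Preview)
Your proposal is correct and matches the paper's approach exactly: the paper does not give a separate proof but simply states that Lemma~\ref{lem:DVlemma} is ``a rephrasing of Lemma~\ref{thm:DVlemmalikeCatoni},'' and you have spelled out precisely how that rephrasing goes (specialize $\beta=1$, replace $f$ by $-g$, use non-negativity of $D(P\|P^\ast)$, and exhibit the maximizer $g^\star=\ln\tfrac{\mathrm{d}P}{\mathrm{d}Q}$). The measure-theoretic caveats you flag are indeed routine and do not affect the argument.
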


We include a proof of the information exponential inequality, 
since we will use the arguments. 
\begin{proof}[Proof of Lemma \ref{lem:InfoExpInequality}]
	Applying the Donsker-Varadhan Lemma \ref{lem:DVlemma} to the function, 
	\begin{align}\label{eq:TongZhangDV}
	f(w) = n\beta (M_{\beta}(w)-L_S(w)),
	\end{align}
	we obtain,
	\begin{align}\label{eq:TongZhangtemp}
	n\beta \E_{\post}[M_{\beta}(W)-L_S(W)] - D(\post\|\prior) \le \ln \E_{\prior}[e^{n\beta(M_{\beta}(W)-L_S(W))}].
	\end{align}	
	Exponentiating both sides of \eqref{eq:TongZhangtemp} and taking expectations w.r.t. $S\sim\mu^{\otimes n}$, we have
	\begin{align}\label{eq:TongZhangtemp1}
	{\E}_{S}\exp \big\{n\beta{\E}_{\post}\left[M_{\beta}(W)-L_S(W)\right]-D(\post \| \prior)\big\}\le {\E}_{S}\E_{\prior}[e^{n\beta(M_{\beta}(W)-L_S(W))}].
	\end{align}	
	Since $Z_i\overset{\text{i.i.d.}}{\sim}{\mu}$, for any $w\in\mathcal{W}$ and $\beta>0$, we have $e^{-n\beta M_{\beta}(w)}={\E}_{S\sim\mu^{\otimes n}}\left[e^{-n\beta L_S(w) }\right]$. 
	This observation and Fubini's theorem implies that the right hand side of \eqref{eq:TongZhangtemp1} is equal to one. This proves the IEI.
\end{proof}

\begin{proof}[Proof of Theorem~\ref{cor:TongZhangIRM}]
	Letting $R(S):=n\beta \E_{\post}[M_{\beta}(W)-L_S(W)] - D(\post\|\prior)$, by Lemma~\ref{lem:InfoExpInequality}, we have $\E_S[e^{R(S)}] \le 1$.
	By Markov's inequality, 
	\[\underset{S}{\Pr}\left(R(S) > \ln \frac{1}{\delta}\right) = \underset{S}{\Pr}\left(e^{R(S)} > \frac{1}{\delta}\right) \le \E_S[e^{R(S)}]\delta\le \delta.\] 
	Therefore, with probability of at least $1 -\delta$ over the choice of $S\sim\mu^{\otimes n}$, we have for all $\post\ll\prior$,
	\begin{align*}
	\E_{\post} [M_{\beta}(W)] \leq \E_{\post}[L_S(W)] + \frac{1}{n\beta}\left(D(\post\|\prior)+\ln \frac{1}{\delta}\right).
	\end{align*}
	By assumption \eqref{eq:AssumptionStar},
	\begin{align*}
	\sup_{w \in \cW} \left[L_{\mu}(w)-M_{\beta}(w)\right]\leq \frac{\psi(\beta)}{\beta},\, \forall\beta>0, 
	\end{align*}
	when \eqref{eq:TongZhangHighProbBoundOpt} follows.
	
	For the bound in expectation, note that by Jensen's inequality,
	$e^{\E_S[R(S)]}\le \E_S[e^{R(S)}]\le 1$, which implies $\E_S[R(S)]\le 0$, when we have
	\begin{align*}
	\E_{SW} [M_{\beta}(W)] \leq \E_{SW}[L_S(W)] + \frac{1}{n\beta }D(\post\|\prior|P_S).
	\end{align*}
	Using \eqref{eq:AssumptionStar}, and rearranging and optimizing, we have
	\begin{align*}
	\E_{SW} [\mathrm{g}(W,S)] &\le \inf_{\beta>0}\frac{\tfrac{1}{n}D(\post\|\prior|P_S) + \psi(\beta)}{\beta} \\&=\psi^{*-1}\left(\frac{D(\post\|\prior|P_S)}{n}\right),
	\end{align*}
	where the second equality follows from Lemma~\ref{lemma:psistar}.
\end{proof}

The proof of Proposition~\ref{thm:PACUnionboundMIsubGaussian} follows that of \cite[Lemma 8]{TimvanErvenPACBayesTongZhang}, extending it to sub-Gaussian losses.
\begin{proof}[Proof of Proposition~\ref{thm:PACUnionboundMIsubGaussian}] 
	For $0<u<v$, and $i=0,\ldots,\ceil{\log_{\alpha}\tfrac{v}{u}}-1$, for all $i$ let $\beta_i=u\alpha^i$ be selected before the draw of the training sample. Then for every $\beta\in[u,v]$, there is a $\beta_i$ such that $\beta_i\le \beta \le \alpha\beta_i$. 
	
	We can extend \eqref{eq:TongZhangHighProbBound} by applying a union bound over the $\beta_i$'s, so that for all $\post$ with probability of at least $1-\delta$ over the draw of $S$, the following holds simultaneously for all $\beta_i$:
	\begin{align}
	\E_{\post} [M_{\beta_i}(W)] \leq \E_{\post} [L_{S}(W)] + \frac{\alpha}{n\beta_i}\left(D(\post\|\prior)+ \ln{\frac{\ceil{\log_{\alpha}\tfrac{v}{u}}}{\delta}}\right).
	\end{align}
	By Proposition \ref{Prop:logmgfProperties} item \emph{2)}, for any $w\in\cW$, $M_{\beta}(w)$ is a nonincreasing of $\beta$. Thus for any $\beta\in[u,v]$ and $\beta_i$ such that $\beta_i\le \beta \le \alpha\beta_i$, $M_{\beta}(w)\le M_{\beta_i}(w)$ and $\tfrac{1}{\beta_i}\le \tfrac{\alpha}{\beta}$. Moreover, since $\ell(w,Z)$ is $\sigma$-sub-Gaussian under $\mu$ by assumption, we have for all $w\in\cW$ and $\beta>0$, $L_{\mu}(w) \le M_{\beta}(w) + \frac{\beta}{2}\sigma^2$. Hence, with probability of at least $1-\delta$ we have,
	\begin{align}\label{eq:PACBayesSubgammatemp}
	\E_{\post} [L_{\mu}(W)] \leq \E_{\post} [L_{S}(W)] + \frac{\alpha}{n\beta}\left(D(\post\|\prior)+ \ln{\frac{\ceil{\log_{\alpha}\tfrac{v}{u}}}{\delta}}\right) + \frac{\beta\sigma^2}{2}.
	\end{align}
	Letting $J = D(\post\|\prior)+\ln \frac{\log_{\alpha}\sqrt{n}+K}{\delta}$, we find that the value for $\beta$ that optimizes 
	the right hand side of the bound in the statement of the proposition
	is bounded from below by $\sqrt{\frac{2\alpha}{n\sigma^2}}$.
	Letting $u=\frac{1}{\sqrt{n}}\min\left\{\sqrt{\frac{2\alpha}{\sigma^2}},v\right\}$ and plugging it in \eqref{eq:PACBayesSubgammatemp} completes the proof.
\end{proof}

\subsection{Proofs for Section \ref{sec:CMIb}}
\begin{proof}[Proof of Proposition~\ref{thm:CMIPACBound01Loss}]
	Applying the Donsker-Varadhan lemma \ref{lem:DVlemma} to the function, 
	$f(w) = n\beta \mathrm{g}(w,\sZ,U)$, 
	and following the same steps as in the proof of Lemma \ref{lem:InfoExpInequality}, we arrive at
	\begin{align}\label{eq:CMItemp1}
	{\E}_{\sZ,U}\exp \big\{n\beta{\E}_{\post}[\mathrm{g}(W,\sZ,U)]-D(\post \| \prior)\big\}\le \E_{\prior}{\E}_{\sZ,U}[e^{n\beta\mathrm{g}(W,\sZ,U)}]=\E_{\prior}{\E}_{\sZ}\E_U[e^{n\beta\mathrm{g}(W,\sZ,U)}],
	\end{align}	
	where the last equality follows since $\sZ\independent U$. Since $\ell\in[0,1]$, $\mathrm{g}(W,\sZ,U)$ is $\tfrac{1}{\sqrt{n}}$-sub-Gaussian. 
	Morever, $\E_U[\mathrm{g}(W,\sZ,U)]=0$. 
	By Hoeffding’s lemma, we have ${\E}_{\sZ}\E_U[e^{n\beta\mathrm{g}(W,\sZ,U)}]\le e^{n\beta^2/2}$, and hence
	\begin{align}\label{eq:CMItemp2}
	{\E}_{\sZ,U}\exp \big\{n\beta{\E}_{\post}[\mathrm{g}(W,\sZ,U)]-D(\post \| \prior)-\tfrac{n\beta^2}{2}\big\}\le  1.
	\end{align}
	\eqref{eq:CMIPACBound01Loss} then follows by an application of Markov's inequality.
	
	Let $R(\sZ,U)=n\beta{\E}_{\post}[\mathrm{g}(W,\sZ,U)]-D(\post \| \prior)-\tfrac{n\beta^2}{2}$. From \eqref{eq:CMItemp2}, and using Jensen's inequality, we have
	$e^{\E_{\sZ,U}[R(\sZ,U)]}\le \E_{\sZ,U}[e^{R(\sZ,U)}]\le 1$, which implies 
	\[
	\E_{\sZ,U,W}[\mathrm{g}(W,\sZ,U)]\le \inf_{\beta>0} \left(\frac{D(\post_{W|\sZ U}\|\prior_{W|\sZ}|P_{\sZ U})}{n\beta}+\frac{\beta}{2}\right)=\sqrt{\frac{2\cdot D(\post_{W|\sZ U}\|\prior_{W|\sZ}|P_{\sZ U})}{n}},
	\] 
	and we have shown \eqref{eq:CMIPACBound01LossExpectation}.
\end{proof}
Under the oracle prior $\prior_{W|\sZ}=P_{W|\sZ}$, we have $D(\post_{W|\sZ U}\|P_{W|\sZ}|P_{\sZ U})=I(W;U|\sZ)$. By noting that $\E_{\sZ,U,W}[\mathrm{g}(W,\sZ,U)]=\E_{\sZ,U,W}[\mathrm{g}(W,\sZ_U)]$, we recover \cite[Theorem 2(1)]{steinke2020reasoning}.

\subsection{Omitted details in Section \ref{sec:CatoniMcAllRecovery}}\label{sec:PACBayes}
We show how inequality \eqref{eq:TongZhangHighProbBound} relates to other well-known PAC-Bayesian inequalities such as the ``PAC-Bayes-KL-inequality'' \cite{seeger2002pac,maurer2004PACBayes}. 
Applying the Donsker-Varadhan lemma to the function $f(w)=n\beta(L_{\mu}(w)-L_S(w))$, which involves the true risk $L_{\mu}(w)$ instead of the annealed expectation $M_{\beta}(w)$ (see \ref{eq:TongZhangDV}), and following the same steps as in the proof of \eqref{eq:TongZhangHighProbBound} in Theorem \ref{thm:TongZhangIRM},
we arrive at 
the following PAC-Bayesian bound:
\begin{align}\label{eq:TongZhangtempAlquir}
\Pr_{S\sim\mu^{\otimes{n}}}\Bigg(
\E_{\post} [L_{\mu}(W)] \leq 
\E_{\post} [L_{S}(W)] + \frac{1}{n\beta} \Big[&D(\post\|\prior)+ \ln{\frac{1}{\delta}}\notag\\&+\ln \E_{\prior} \E_{S'\sim \mu^{\otimes n}} \,e^{n\beta\big(L_{\mu}(W)-L_{S'}(W)\big)}\Big]
\Bigg)\ge 1-\delta. 
\end{align}
For an explicit comparison of \eqref{eq:TongZhangtempAlquir} with \eqref{eq:TongZhangHighProbBound}, we write the latter as
\begin{align}\label{eq:TongZhangtemp2}
\Pr_{S\sim\mu^{\otimes{n}}}\Bigg(
\E_{\post} [M_{\beta}(W)] \leq 
\E_{\post} [L_{S}(W)] + \frac{1}{n\beta} \Big[&D(\post\|\prior)+ \ln{\frac{1}{\delta}}\notag\\&+\underbrace{\ln \E_{\prior} \E_{S'\sim \mu^{\otimes n}} \,e^{n\beta\big(M_{\beta}(W)-L_{S'}(W)\big)}}_{=0}\Big]
\Bigg)\ge 1-\delta,
\end{align}
where the last term in the right hand side of the bound in \eqref{eq:TongZhangtemp2} vanishes since 
\[e^{-n\beta M_{\beta}(w)}={\E}_{S'\sim\mu^{\otimes n}}\left[e^{-n\beta L_{S'}(w) }\right]\]
for any $w\in\mathcal{W}$ and $\beta>0$. 
In contrast, the term $\ln \E_{\prior} \E_{S'\sim \mu^{\otimes n}} \,e^{n\beta(L_{\mu}(W)-L_{S'}(W))}$
involving the true risk in \eqref{eq:TongZhangtempAlquir} is, in general, positive. 

Specializing to the case of a $\{0,1\}$-valued loss, fix $\beta=1$, and let $\Delta:[0,1]\times [0,1]\to \Rb$ be a convex function. Applying the Donsker-Varadhan lemma to the function, $f(w)=n\Delta\left(L_{S}(w),L_{\mu}(w)\right)$, 
following the same steps as in the proof of \eqref{eq:TongZhangHighProbBound} in Theorem \ref{thm:TongZhangIRM}, 
and by noting that $$\Delta\left(\E_{\post}[L_S(W)],\E_{\post} [L_{\mu}(W)])\right)\leq \mathbb{E}_{\post}\left[\Delta\left(L_{S}(W),L_{\mu}(W)\right)\right],$$ we arrive at the following PAC-Bayesian bound (see, e.g., \cite[Lemma 3]{maurer2004PACBayes}, \cite[Theorem 2.1]{germain2009Paclinear}, \cite[Equation 4]{rivasplatapac}):
\begin{align}\label{eq:TongZhangtempMaurer}
\Pr_{S\sim\mu^{\otimes{n}}}\Bigg( 
\Delta\left(\E_{\post}[L_S(W)],\E_{\post} [L_{\mu}(W)])\right) \le \frac{1}{n} \Big[&D(\post\|\prior)+\ln{\frac{1}{\delta}}\notag\\&+\ln \E_{\prior} \E_{S'\sim \mu^{\otimes n}} \,e^{n\Delta\left(L_{S'}(W),L_{\mu}(W)\right)}\Big]
\Bigg)\ge 1-\delta.
\end{align}
For $x,y\in[0,1]$, the binary KL divergence is $\mathrm{kl}(y\|x)=y\ln\tfrac{y}{x}+(1-y)\ln \tfrac{1-y}{1-x}$. The PAC-Bayes-KL-inequality \cite{seeger2002pac,maurer2004PACBayes} comes about by upper-bounding the log-exponential-moment term involving the true risk in the right hand side of the bound in \eqref{eq:TongZhangtempMaurer}: For $\Delta(y,x) = \mathrm{kl}(y \| x)$, Maurer \cite{maurer2004PACBayes} showed that for $n\ge 8$,  $\E_{\prior}\E_{S'}[e^{n\Delta\left(L_{S'}(W),L_{\mu}(W)\right)}]\le 2\sqrt{n}$, when we have
\begin{align}\label{eq:TongZhangtempMaurerkl}
\Pr_{S\sim\mu^{\otimes{n}}}\Bigg( 
\mathrm{kl}\left(\E_{\post}[L_S(W)],\E_{\post} [L_{\mu}(W)])\right) \le \frac{1}{n} \left[D(\post\|\prior)+\ln{\frac{2\sqrt{n}}{\delta}}\right]
\Bigg)\ge 1-\delta.
\end{align}
\eqref{eq:TongZhangtempMaurerkl} can be interpreted as a ``non-parametric'' version of McAllester's linear PAC-Bayes bound \eqref{eq:McAllesterLinear} that is uniform in $\beta$ at the cost of a $O\left(\tfrac{\ln \sqrt{n}}{n}\right)$ term. 

Letting $\Delta(y,x) = 2(y-x)^2$ in \eqref{eq:TongZhangtempMaurer} leads to the bound in \cite{mcallester1999some}, while letting $\Delta(y,x) = (y-x)^2/(2x)$ leads to that in~\cite{rivasplataThiemannPACBackprop}. 

Under a sub-gamma loss assumption, the bounds in either \eqref{eq:TongZhangtempAlquir} or \eqref{eq:TongZhangtemp2} lead to (see Corollary \ref{thm:PACBayesSubGamma}):
\begin{align}\label{eq:TongZhangsubGamma}
\Pr_{S\sim\mu^{\otimes{n}}}\Bigg( 
\E_{\post} [L_{\mu}(W)] \leq 
\E_{\post} [L_{S}(W)] + \frac{1}{n} \left[D(\post\|\prior)+\ln{\frac{1}{\delta}}\right] + \frac{\sigma^2}{2(1-c)}
\Bigg)\ge 1-\delta.
\end{align}

\subsection{Proofs for Section \ref{sec:DPPriors}}
The proof of Proposition~\ref{thm:PACDPPrior} follows closely that of \cite[Theorem 4.2]{dziugaitediffprivateprior}.
\begin{proof}[Proof of Proposition~\ref{thm:PACDPPrior}]
	For every $\prior\in\cM(\cW)$, let
	\begin{align*}
	F(\prior) = \left\{S'\in\cZ^n\colon \exists\,\post\in\cM(\cW),\, \E_{\post} [M_{\beta}(W)] \ge \E_{\post}[L_{S'}(W)] + \frac{1}{n\beta}\left(D(\post\|\prior)+\ln\frac{1}{\delta'}\right)\right\}.
	\end{align*}
	By Theorem \ref{thm:TongZhangIRM}, we have $\Pr_{S'\sim\mu^{\otimes{n}}}\big(S'\in F(\prior)\big)\le \delta'$. From \eqref{eq:MaxInfoDependence}, we have
	\begin{align*}
	\Pr_{S\sim\mu^{\otimes{n}}}\big(S\in F(\prior^0(S))\big)&\le e^{I_{\infty, \mu}^{\alpha}(Q^0, n)}\cdot \Pr_{(S,S')\sim\mu^{\otimes{2n}}}\big(S'\in F(\prior^0(S))\big)+\alpha\le e^{I_{\infty, \mu}^{\alpha}(Q^0, n)}\cdot\delta'+\alpha.
	\end{align*}
	Letting $\delta:=e^{I_{\infty, \mu}^{\alpha}(Q^0, n)}\cdot\delta'+\alpha$, for $\alpha\in(0,\delta)$ we have, 
	\begin{align*}
	\Pr_{S\sim\mu^{\otimes{n}}}\Bigg(\exists\,\post\in\cM(\cW),\, \E_{\post} [M_{\beta}(W)] &\ge \E_{\post}[L_{S}(W)] \\&+ \frac{1}{n\beta}\left(D(\post\|\prior^0(S))+\ln\frac{1}{\delta-\alpha}+I_{\infty, \mu}^{\alpha}(Q^0, n)\right)\Bigg)\le \delta.
	\end{align*}
	The proof is complete by replacing $I_{\infty, \mu}^{\alpha}(Q^0, n)$ with the bound in Theorem~\ref{thm:max-infoDP}, and choosing $\alpha=\tfrac{\delta}{2}$.
\end{proof}

\subsection{Proofs for Section \ref{sec:PAC-Bayes-SGD}}
We show how to optimize the bound in Corollary \ref{thm:pacbayesCatoniDiff} w.r.t. the parameters $\beta$ and $\lambda$.

First, note that the bound in Corollary \ref{thm:pacbayesCatoniDiff} holds uniformly for all $\beta>1$ at an additional cost arising from a union bound argument \cite[Theorem 1.2.7]{catonibook}:
For $\alpha>1$, 
\begin{align}\label{eq:catoniDD}
\E_{\post}[L_\mu(f_W)] \le \inf_{\beta>1} \Phi_{\beta}^{-1}\Big\{
\E_{\post}[L_S(f_W)] + \frac{\alpha}{n\beta} \Big[ D(\post\|\prior) +\ln\frac{1}{\delta} + 2\ln \Big(\frac{\ln \alpha^2\beta n}{\ln \alpha}\Big)\Big]
\Big\}.
\end{align}

Second, we select $\lambda$ before the draw of the training sample from a finite grid of possible values: Following \cite{langfordNotBounding,dziugaitenonvacuous}, let $\lambda =c e^{-j/b}$ for some $j\in\mathbb{N}$ and fixed $b\in\mathbb{N}$, $c\in (0,1)$, where $b$ and $c$ control, resp., the resolution and size of the grid. If \eqref{eq:catoniDD} holds for each $j\in\mathbb{N}$ with probability of at least $1-\tfrac{6\delta}{\pi^2 j^2}$, then by the union bound, it holds for all $j\in\mathbb{N}$ simultaneously with probability of at least $1-\delta$, since $\sum_{j=1}^{\infty} \tfrac{6}{\pi^2 j^2} = 1$. Solving for $j$ in terms of $\lambda$, we have 
\begin{align*}
\E_{\post}[L_\mu(f_W)] \le 
\inf_{\beta>1,\,\lambda\in (0,c)} 
\Phi_{\beta}^{-1}\Bigg\{
&\E_{\post}[L_S(f_W)]\\&+ \frac{\alpha}{n\beta} \left[ D(\post\|\prior) 
+\ln\left(\frac{\pi^2 b^2}{6\delta}\left(\ln\frac{c}{\lambda}\right)^2\right)
+ 2\ln \Big(\frac{\ln \alpha^2\beta n}{\ln \alpha}\Big)\right]
\Bigg\}.
\end{align*}

Finally, we account for the cost of using a Monte Carlo estimate of the empirical risk, 
$\hat{L}_S(f_W)=\tfrac{1}{m}\sum_{i=1}^{m} L_S(f_{W_i})$, where $W_i\overset{\text{i.i.d.}}{\sim}{\post}$. 
By an application of the Chernoff bound \cite[Theorem 2.5]{langfordNotBounding} 
and Pinsker's inequality, for any $\delta'\in(0,1)$, we have with probability of at least $1-\delta'$, $\E_{\post}[{L}_S(f_W)]\le \hat{L}_S(f_W) +\sqrt{\tfrac{1}{2m}\ln\tfrac{2}{\delta'}}$. 

By another application of the union bound, Corollary \ref{thm:pacbayesCatoniDiff} finally takes the form: For any $\delta,\delta'\in(0,1)$, fixed $\alpha>1$, $c\in (0,1)$, $b\in\mathbb{N}$, and $m,n\in\mathbb{N}$, with probability of at least $1-\delta-\delta'$ over a draw of $S\sim\mu^{\otimes n}$ and $W\sim (\post)^{\otimes m}$, 
\begin{align*}
\E_{\post}[L_{\mu}(f_W)] \le \inf_{\post\in\cG,\beta>1,\lambda\in (0,c)}&\Phi_{\beta}^{-1}\Big\{\hat{L}_S(f_W)+ \frac{\alpha}{n\beta}D(\post\|\prior)\notag+R(\lambda,\beta;\delta,\delta')\Big\},
\end{align*}
where $R=\frac{2\alpha}{n\beta}\ln \left[\frac{\ln \alpha^2\beta n}{\ln \alpha}\right]+\frac{\alpha}{n\beta}\ln\left[\tfrac{\pi^2 b^2}{6\delta}\left(\ln\tfrac{c}{\lambda}\right)^2\right]+\sqrt{\tfrac{1}{2m}\ln\tfrac{2}{\delta'}}$.

\subsection{Proofs for Section \ref{sec:PACOccam}}
\begin{proof}[Proof of Lemma \ref{lemma:GibbsLaplaceApprox}]
	Letting $\theta = w - w_{\post}$, and $\post' = \post - w_{\post}$, note that $\theta^\top H\theta = \tr(\theta^\top H\theta) = \tr(H\theta\theta^\top)$. Hence 
	\[\E_{\post'}[\tfrac{1}{2}\theta^\top H\theta] = \E_{\post'}[\tfrac{1}{2}\tr(H\theta\theta^\top)] = \tfrac{1}{2}\tr(H\E_{\post'}[\theta\theta^\top])=\tfrac{1}{2}\tr(H\Sigma_{\post}).
	\]
	For $\prior \sim \cN(w_{\prior},\Sigma_{\prior})$ and $\post \sim \cN(w_{\post},\Sigma_{\post})$, we have
	\begin{align*}
	\E_{\post'}&[\tfrac{1}{2}\theta^\top H\theta]+{(n\beta)}^{-1} D(\post\|\prior) \\&= \tfrac{1}{2}\tr(H\Sigma_{\post})+{(n\beta)}^{-1} D(\post\|\prior) 
	\\&= \frac{\tr(H\Sigma_{\post})}{2} + \frac{{(n\beta)}^{-1}}{2}\left(\ln \frac{\det\Sigma_{\prior}}{\det \Sigma_{\post}}
	+\tr\left(\Sigma_{\prior}^{-1} \Sigma_{\post}\right)-k
	+\left(w_{\prior}-w_{\post}\right)^{\top} \Sigma_{\prior}^{-1}\left(w_{\prior}-w_{\post}\right)\right).
	\end{align*}
	The derivative of the RHS w.r.t. $\Sigma_{\post}$ is 
	$\frac{1}{2}\left[{H} - {{(n\beta)}^{-1}}\Sigma_{\post}^{-1} + {{(n\beta)}^{-1}}\Sigma_{\prior}^{-1}\right]^\top$,
	where we have used the fact that $\nabla_A \tr(AB) = B^\top, \text{ and } \nabla_A \ln\det(A) = (A^{-1})^\top$. 
	Setting the derivative to zero and $\Sigma_{\prior}={\lambda}^{-1}I_k$ yields the result.
\end{proof}

\begin{proof}[Proof of Proposition \ref{thm:OccamPAC}]
	The proof follows from Theorem \ref{thm:TongZhangIRM}, and the fact that for $\prior = \cN(w_{\prior},{\lambda}^{-1}I_k)$, $\post = \cN(w_{\post},H_{\lambda}^{-1})$ such that $\lambda_i\ge \lambda>0$ for all $i$, we have
	\begin{align*}
	D(\post\|\prior)=
	\frac{1}{2}\left(
	\lambda\|w_{\prior}-w_{\post}\|^2
	+\sum_{i=1}^{k}\ln \frac{\lambda_i}{\lambda}
	+\sum_{i=1}^{k}\left(\frac{\lambda}{\lambda_i}-1\right)
	\right) 
	\le
	\frac{1}{2}\left(\lambda\|w_{\prior}-w_{\post}\|^2
	+\sum_{i=1}^{k}\ln \frac{\lambda_i}{\lambda}
	\right).
	\end{align*}
\end{proof}

\end{document}